\DeclarePairedDelimiter\floor{\lfloor}{\rfloor}
\renewcommand\hl[1]{#1} 
\begin{document}

\title{STRATA: Unified Framework for Task Assignments in Large Teams of Heterogeneous Agents}

\author{Harish Ravichandar \and
        Kenneth Shaw \and Sonia Chernova
        }

\institute{Georgia Institute of Technology, Atlanta, GA. \\
           Email: {\tt\small \{harish.ravichandar, kshaw, chernova\} @gatech. edu}
           }%

\maketitle

\begin{abstract}
    Large teams of heterogeneous agents have the potential to solve complex multi-task problems that are intractable for a single agent working independently. However, solving complex multi-task problems requires leveraging the relative strengths of the different kinds of agents in the team.  We present \emph{Stochastic TRAit-based Task Assignment (STRATA)}, a unified framework that models large teams of heterogeneous agents and performs effective task assignments. Specifically, given information on which \emph{traits} (capabilities) are required for various tasks, STRATA computes the assignments of agents to tasks such that the trait requirements are achieved. Inspired by prior work in robot swarms and biodiversity, we categorize agents into different \emph{species} (groups) based on their traits. We model each trait as a \emph{continuous} variable and differentiate between traits that can and cannot be aggregated from different agents. STRATA is capable of reasoning about both species-level and agent-level variability in traits. Further, we define measures of \emph{diversity} for any given team based on the team's continuous-space trait model. We illustrate the necessity and effectiveness of STRATA using detailed experiments based in simulation and in a capture-the-flag game environment.
\end{abstract}

\section{Introduction}

The study of multi-agent systems has produced significant insights into the process of engineering collaborative behavior in groups of agents \cite{olfati2007consensus, bordini2007programming}. These insights have resulted in large teams of agents capable of accomplishing complex tasks that are intractable for a single agent, with applications including environmental monitoring \cite{shkurti2012multi}, agriculture \cite{tokekar2016sensor}, warehouse automation \cite{wurman2008coordinating}, construction \cite{werfel2014designing}, defense \cite{beni2004swarm}, and targeted drug delivery \cite{li2017micro}. Efficient solutions to the above problems typically rely on a wide range of capabilities. 
Teams of heterogeneous agents are particularly well suited for performing complex tasks that require a variety of skills, since they can leverage the relative advantages of the different agents and their capabilities.
In this work, we are motivated by robotics applications, and the multi-robot task assignment (MRTA) problem in particular \cite{gerkey2004formal, korsah2013comprehensive, khamis2015multi} which formally defines the challenges involved in optimally assigning agents to tasks.

We present \emph{Stochastic TRAit-based Task Assignment (STRATA)}, a unified modeling and task assignment framework, to solve an instance of the MRTA problem with an emphasis on large heterogeneous teams.
We model the topology of tasks as a strongly connected graph, with each node representing a task and or a physical location and the edges indicating the possibility of switching between any two tasks. We assume that the optimal agent-to-task associations are unknown and that the task requirements are specified in terms of the various \emph{traits} (capabilities) required for each task. Thus, in order to effectively perform the tasks, the agents must reason about their combined capabilities and the limited resources of the team. To enable this reasoning, we take inspiration from prior work in robot swarms \cite{prorok2017impact} and biodiversity \cite{petchey2002functional}, and propose a group modeling approach \cite{albrecht2018autonomous} to model the capabilities of the team. Specifically, we assume that each agent in the team belongs to a particular \emph{species}~\footnote{\hl{Similar to prior work in multi-robot systems} \cite{prorok2017impact}, \hl{we use the term ``species" to describe a group of agents with similar traits.  This does not imply any similarity to biological species.}}. Further, each species is defined based on the traits possessed by its members. Assuming that the agents are initially sub-optimally assigned to tasks on the task graph, STRATA computes assignments such that the agents can reorganize themselves to collectively aggregate the traits necessary to meet the task requirements as quickly as possible.

\begin{figure*}[t]
    \centering
    \includegraphics[scale=0.45] {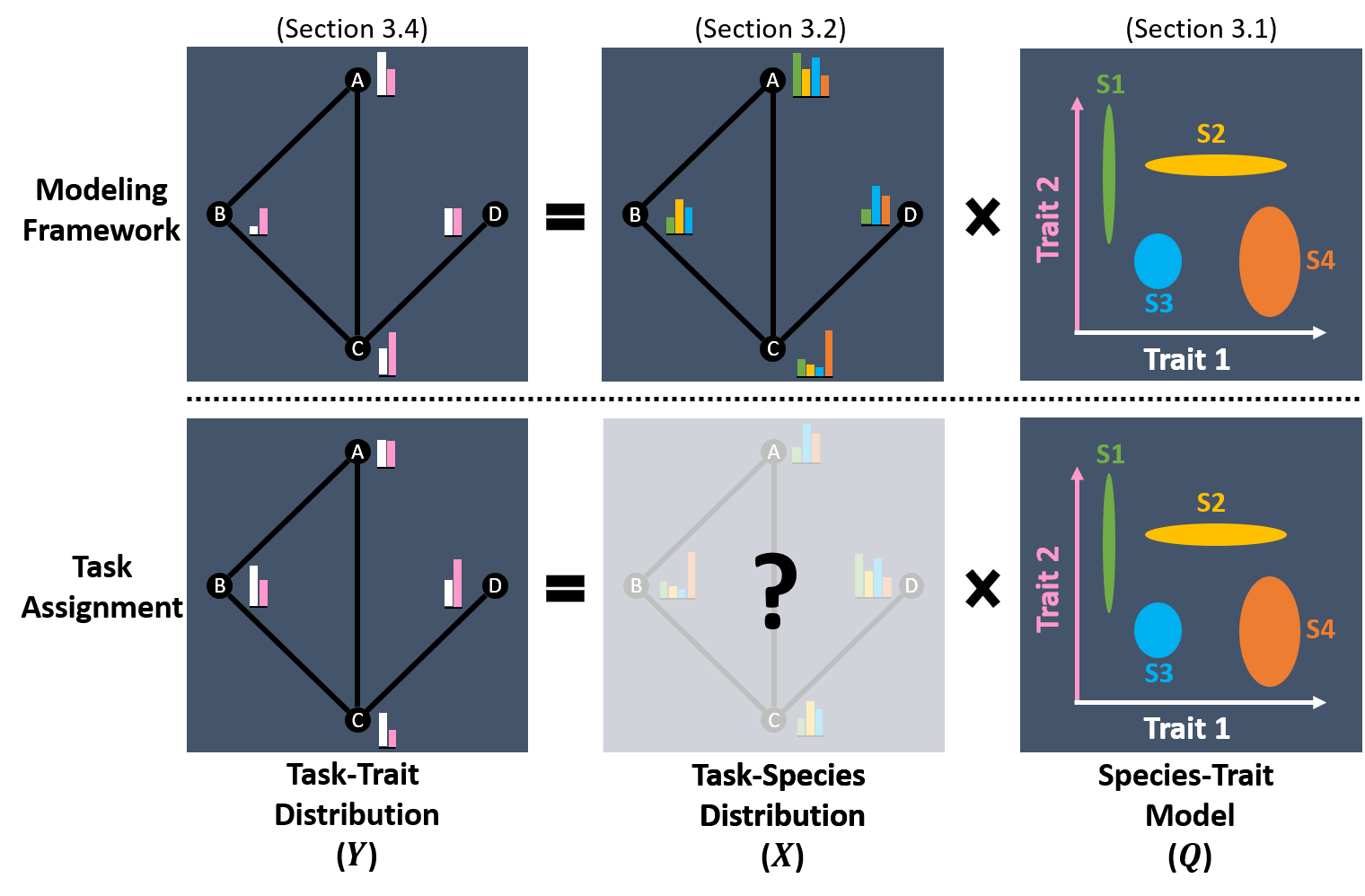}
    \caption{\textit{Top row:} STRATA defines the effects of task-species distribution and the species-trait model on the task-trait distribution. \textit{Bottom row}: Given a team defined by the species-trait model, we aim to perform task assignments such that the desired task-trait distribution is achieved.}
    \label{fig:modeling_framework}
\end{figure*}

\hl{
In Fig. {\ref{fig:modeling_framework}}, we illustrate the basic building blocks of STRATA and the task assignment problem. As seen in the top row, STRATA models the effects of task assignments (task-species distribution $X$) and the species' traits (species-trait model $Q$) on how the traits are aggregated for each task (task-trait distribution $Y$). The example species-trait model in Fig. {\ref{fig:modeling_framework}} illustrates the distribution of two traits (Trait 1 and Trait 2) in four different species ($S1$, $S2$, $S3$, and $S4$). Colored bar graphs near each node of the graph are used to illustrate either the task-trait ($Y$) or the task-species ($X$) distributions. As can be seen, for any distribution of agents across the tasks, we can compute the corresponding trait distribution across the tasks. We also derive a closed-form expression to quantify the effect of the variability in the agents' traits on the achieved task-trait distribution. The task assignment problem, as seen in the bottom row, involves computing the optimized task assignments, given a desired distribution of traits across the tasks and a team of heterogeneous agents described by a species-trait distribution.
}

Using the above model, STRATA allows for the optimization of two separate task assignment goals: (1) \emph{exact matching} and (2) \emph{minimum matching}. In exact matching, the algorithm aims to distribute the agents such that the achieved trait distribution is as close to the desired as possible. In minimum matching, the algorithm aims to distribute the agents such that the achieved trait distribution is higher than or equal to the desired as possible, i.e., over-provisioning is not penalized.

The STRATA representation of both task and species traits is inspired by \cite{prorok2017impact}, which considered binary instantiations of traits. However, binary models fail to capture the nuances in the scales and natural variations of the agents' traits. For instance, consider an unmanned aerial vehicle and a bipedal robot. While both agents share the mobility trait (the ability to move), their speeds are likely to be considerably different. To address these challenges, in STRATA we have extended the representation to model traits in the \emph{continuous} space. Additionally, STRATA also captures agent-level differences within each species by using a \emph{stochastic} trait model.

When reasoning about the collective strengths of the team, attention must be paid to the fact that not all capabilities are improved in quantity by aggregation of individual agents' abilities. For instance, a coalition of any number of slow robots does not compensate for a faster robot. Taking this observation into account, we consider two types of traits: \emph{cumulative} and \emph{non-cumulative}. We consider a trait to be (non) cumulative if it can (not) be aggregated from different agents in order to achieve certain task requirements.

In this work, we also extend the \emph{diversity measures} introduced in \cite{prorok2017impact} to the continuous space. We derive two separate diversity measures, one for each goal function. The diversity measures provide insights about the trait-based heterogeneity of the team. Specifically, the diversity measures help define a a minimum subset of the species that can collectively compensate for the rest of the team.

In summary, the key contributions of our work include a unified framework for effective task assignment of large heterogeneous teams that:
\begin{enumerate}
  \item incorporates a stochastic trait model that captures both between-species and within-species variations,
  \item assigns tasks to agents with respect to two separate goals: exact matching and minimum matching, and
  \item computes measures of diversity in teams with continuous trait models.
\end{enumerate}

We evaluate STRATA using detailed simulations and a capture-the-flag game environment. Our results demonstrate the necessity and effectiveness of STRATA in terms of effective task assignment and improved team performance, when compared to a baseline that only considers binary traits.

Finally, we note that heterogeneous teams can be composed of both robotic and human agents.  Human traits can be vastly different from and complementary to those of robots \cite{decostanza2018enhancing}. For instance, when compared to humans, robots can carry heavier payloads, move faster, and be immune to fatigue. On the other hand, humans' abilities to assimilate and maintain situational awareness, process noisy information, and adapt to highly unstructured environments are unmatched by the abilities of their robotic counterparts. Further, individual differences are considerable in teams involving humans (see \cite{decostanza2018enhancing} and references therein). Although not yet evaluated with human-robot teams, STRATA's ability to characterize humans as one or more separate species (e.g., soldiers, pilots, medics) possessing stochastic traits makes it a promising representation for modeling human-robot teaming.

\section{Related Work}

Significant efforts have been focused on problems in multi-robot systems task assignment (MRTA) \cite{gerkey2004formal, korsah2013comprehensive, khamis2015multi}. Broadly, the problems are categorized based on three binary characteristics: (1) Task type (single-robot [SR] vs multi-robot [MR]), (2) robot type (single-task [ST] vs multi-task [MT], and (3) assignment type (instantaneous [IA] vs time-extended [TA]) \cite{gerkey2004formal}. While task type indicates the number of robots required to complete each task, robot type indicates whether the robots are capable of simultaneously performing a single task or multiple tasks. The assignment type is used to differentiate between tasks that involve scheduling constraints and those that do not. Indeed, numerous approaches related to the different variants of MRTA, including assignments involving single-robot tasks, are available in the literature. However, we limit our coverage of related work to variants of MRTA that involve multi-robot tasks - tasks that involve the coordination of several robots. We refer readers to \cite{gerkey2004formal,korsah2013comprehensive} for comprehensive categorizations and examples of all approaches pertaining to task assignment.

Several methods for task assignment with homogeneous agents have been proposed. A graph-theoretic framework, named SCRAM, is proposed in \cite{AAAI15-MacAlpine}. SCRAM maps agents to target locations while simultaneously avoiding collisions and minimizing the time required to reach target locations. The work in \cite{ma2016optimal} presents a hierarchical algorithm that is correct, complete, and optimal for simultaneously task assignment and path finding. A fast bounded-suboptimal algorithm, that automatically generates highways for a team of homogeneous agents to reach their target locations, is introduced in \cite{cohen2016improved}. Notably, the methods in \cite{AAAI15-MacAlpine, ma2016optimal,cohen2016improved} emphasize optimal path finding for each robot and collision avoidance in order to assign each robot to a single task (reaching a target location). However, these methods assume that all the agents in the team are interchangeable, and thus are not suitable for multi-task scenarios that involve several heterogeneous agents.

Approaches involving single-task robots solve the assignment problem by assuming that each robot is specialized and can only perform one task. The method proposed in \cite{shehory1995kernel} addresses a transportation task involving multiple single-task robots. Some of the items to be transported can be transported by a single robot, while others need coordinated efforts from several robots. \cite{shehory1995kernel} use a greedy set-partitioning algorithm to form coalitions of robots required to perform the tasks. Potential coalitions are iteratively computed for each task involved. The coalition formation algorithm introduced in \cite{shehory1995kernel} was later extended in \cite{vig2006multi}. The extended algorithm in \cite{vig2006multi} reduces the communication effort, balances the coalitions, and constrains the requirements to specify if and when all the required traits must be possessed by a single robot. These approaches, however, require the listing of all potential coalitions and thus are not suitable for problems involving large number of possible coalitions. Indeed, the number of possible coalitions is a factor of both the number of robots in the team and the inherent diversity of the team. Specifically, as the number of robots in the team and their similarities increase, so does the number of possible coalitions. STRATA, on the other hand, is scalable with the number of agents as it does not list all possible coalitions.

Decentralized approaches for task assignment are introduced in \cite{jang2018anonymous, berman2009optimized, hsieh2008biologically, matthey2009stochastic}. A game-theoretic task assignment strategy is introduced in \cite{jang2018anonymous} to assign tasks to a team of homogeneous robots with social inhibition. In \cite{berman2009optimized}, multiple tasks are assigned to a team of homogeneous robots. The authors develop of a continuous abstraction of the team by modeling population fractions and defining the task allocation problem as the selection of rates of robot switching from and to each task. In \cite{hsieh2008biologically}, the authors extend the method in \cite{berman2009optimized} with a wireless communication-free quorum sensing mechanism in order to reduce task assignment time. In \cite{matthey2009stochastic}, a decentralized approach for heterogeneous robot swarms is introduced. The approach computes optimal rates at which the robots must switch between the different tasks. These rates, in turn, are used to compute probabilities that determine stochastic control policies of each robot. However, a common shortcoming of these decentralized approaches is that they assume that the desired behavior is specified as a function of the distribution of agents across the tasks.

Auction or market-based methods also provide solutions to the MRTA problem involving single-task robots \cite{guerrero2003multi,lin2005combinatorial, vail2003multi, dias2006market}. In \cite{guerrero2003multi}, the robot responsible for any given task is the robot who discovers the task. Once discovered, the robot holds an auction to recruit other robots into a coalition. \cite{lin2005combinatorial} introduce combinatorial bidding to form coalitions and provides explicit cooperation mechanism for robots to form coalitions and bid for tasks. A homogeneous task assignment algorithm for robot soccer is presented in \cite{vail2003multi}. Sensed information from the robots are shared to compute a shared potential function that would help the robots move in a coordinated manner. We refer readers to \cite{dias2006market} for a survey of market-based approaches applied to multi-robot coordination. A common trait of auction or market-based methods is that they require extensive communication for bidding and scale poorly with the number of robots in the team. Further, the methods discussed thus far are limited to either single-robot tasks or single-task robots. In contrast, STRATA considers agents capable of performing tasks that require coordination between multiple agents.

Our work falls under the category of \emph{Single-Task Robots Multi-Robot Tasks Instantaneous Assignment (ST-MR-IA)} problem, also known as the \emph{coalition formation} problem \cite{gerkey2004formal}. In other words, we are interested in assigning a team of agents to several tasks, each requiring several agents. The assignment type is instantaneous since our task assignment does not reason about future task assignments or scheduling constraints. The ST-MR-IA is an instance of the set-partitioning problem in combinatorial optimization and is known to be strongly NP-Hard \cite{gerkey2004formal}. Albeit not developed for MRTA, a few efficient approximate solutions have been proposed for the set partitioning problem \cite{atamturk1996combined, hoffman1993solving}. Based on prior work in set partitioning problems, centralized and distributed algorithms to solve a ST-MR-IA problem have been proposed in \cite{shehory1995kernel, shehory1998methods}. The method in \cite{shehory1998methods} has also been adapted to be more efficient by reducing the required communication and discouraging imbalanced coalitions \cite{vig2006multi}. A method for coalition formation is introduced in \cite{parker2006building} by building a solution to a task by dynamically connecting a network of behaviors associated with individual robots.

A limitation of most of the existing approaches is that the desired behavior is assumed to be specified in terms of optimal agent distribution. A notable exception to this generalization is the framework introduced in \cite{prorok2017impact}, which is capable of receiving the task requirements provided in the form a desired trait distribution cross tasks. We take a position similar to \cite{prorok2017impact}, and do not assume that the desired distribution of agents is known. Another similarity between STRATA and \cite{prorok2017impact} is being suitable for a decentralized implementation. Thus, both approaches are scalable in the number of agents and their capabilities, and are robust to changes in the agent population.

While STRATA shares several similarities with \cite{prorok2017impact}, there are a number of notable relative advantages. First, our species-trait model is continuous, while \cite{prorok2017impact} uses a binary model. Second, we differentiate between cumulative and non-cumulative traits. Third, the framework in \cite{prorok2017impact} utilizes a deterministic model of traits. In contrast, we consider the inherent variability in the agent' traits, thereby capturing the variations at both species and agent levels. Finally, while the diversity measures introduced in \cite{prorok2017impact} are limited to binary trait models, our measures are compatible with continuous-space models.

\section{Modeling framework}\label{sec:model}
In this section, we introduce the various elements of STRATA that enables task assignments in large heterogeneous teams.  Assigning tasks to the different agents in the team requires reasoning about their complementary traits and the limited resources of the team. STRATA handles this challenge using (1) a \emph{stochastic trait model} that describes the capabilities of each species in the team along with the corresponding variance, (2) an \emph{task graph} that describes the dynamics and constraints associated with agents traversing the task graph, and (3) a \emph{agent distribution model} that  specifies how agents are distributed across the various tasks. Together, these models explain the combinations of capabilities that are currently available at each task.

Based on the above mentioned models, we formulate and solve a constrained optimization problem to distribute the agents across the different tasks to satisfy certain trait-based task requirements. Specifically, we compute the transition rates on the task graph, which in turn dictate how task assignments vary as a function of time such that the desired trait distribution is achieved and maintained as quickly as possible. Further, our optimization explicitly reasons about the expected variance of the trait distribution.

Throughout the paper, we illustrate STRATA using a running example of a task assignment problem. We will progressively build the example as we introduce the different parts of the framework.

\subsection{Trait Model}
\label{subsec:trait_model}
\textbf{Base model:} Consider a heterogeneous team of agents. We assume that each agent is a member of a particular species. The number of species $S \in \mathbb{N}$ is finite, and the number of agents in the $s^\mathrm{th}$ species is denoted by $N_s$. Thus, the total number of agents in the team is given by $N = \sum_{s=1}^{S} N_s $ We define each species by its abilities (\emph{traits}). Specifically, the traits of each species are defined as follows
\begin{equation}
    q^{(s)} \triangleq [q^{(s)}_1, q^{(s)}_2, \cdots ,q^{(s)}_U], \quad \forall s = 1, 2, \cdots, S
    \label{eq:q_s}
\end{equation}
where $q^{(s)}_u \in \mathbb{R}_{+}$ is the $u^\mathrm{th}$ trait of the $s^\mathrm{th}$ species, and $U$ is the number of traits. Thus, the traits of the team is defined by a $S \times U$ \emph{species-trait matrix} $Q = [q^{(1)^T}, \cdots, q^{(S)^T}]^T \in \mathbb{R}^{S \times U}_{+}$, with each row corresponding to one species and each column corresponding to one trait.

\vspace{10pt}
\textbf{Stochastic traits}: To capture the natural variability in the capabilities of each species, we maintain a stochastic summary of each species' traits. Specifically, each element of $Q$ is assumed to be an independent Gaussian random variable, $q^{(s)}_u \sim \mathcal{N}(\mu_{su}, \sigma^2_{su})$. Thus, a vector random variable with all the traits of all the species can be written as $q = [q^{(1)},q^{(2)},\cdots,q^{(S)}] \sim \mathcal{N}(\mu_q, \Sigma_q)$, with its mean given by
\begin{equation}
    \mu_q = [\mu_{q^{(1)}}, \cdots, \mu_{q^{(S)}}] \in \mathbb{R}^{SU}_{+}
    \label{eq:mean_q}
\end{equation}
where $\mu_{q^{(s)}} = [\mu_{s1}, \cdots , \mu_{sU}] \in \mathbb{R}^{U}$ contains the expected trait values of the $s^{\mathrm{th}}$ species, and its covariance given by the following block-diagonal matrix
\begin{equation}
    \Sigma_q = \mathrm{diag}([\Sigma_{q^{(1)}}, \cdots, \Sigma_{q^{(S)}}]) \in \mathbb{R}^{SU \times SU}_{+}
    \label{eq:Sigma_q}
\end{equation}
where $\Sigma_{q^{(s)}} = \mathrm{diag}([\sigma_{s1}^2, \cdots , \sigma_{sU}^2]) \in \mathbb{R}^{U \times U}_{+}$ is the diagonal covariance matrix associated with the $s^{\mathrm{th}}$ species. The expected trait values can be rewritten in the form of the \emph{expected species-trait matrix} $\mu_Q = [\mu_{q^{(1)}}^T, \cdots, \mu_{q^{(S)}}^T]^T \in \mathbb{R}^{S \times U}_{+}$ as follows
\begin{align}
    &\bm{\mu}_Q =
    \begin{bmatrix}
    \mu_{11} & \cdots & \mu_{1U} \\
    \vdots & \ddots & \vdots \\
    \mu_{S1} & \cdots & \mu_{SU}
    \end{bmatrix}
    \label{eq:mu_Q}
\end{align}
Similarly, the non-zero diagonal elements of the covariance matrix can be rewritten in matrix form as
\begin{align}
    \mathrm{Var}_Q =
    \begin{bmatrix}
    \sigma_{11}^2 & \cdots & \sigma_{1U}^2 \\
    \vdots & \ddots & \vdots \\
    \sigma_{S1}^2 & \cdots & \sigma_{SU}^2
    \end{bmatrix}
    \label{eq:Var_Q}
\end{align}

While the above stochastic model can be directly specified, it can also be learned directly from data. Given the trait values of each agent in the team, clustering algorithms, such as the Gaussian mixture model (GMM), can be used to automatically find Gaussian clusters, each of which can be interpreted as a probabilistic representation of a species in the trait space.

\begin{tcolorbox}[colback=blue!5!white, enhanced jigsaw, breakable]
\textit{Example}: Consider an example scenario in which the team is made up of four species, each consisting of $25$ agents. Each species is characterized by the following four traits: \emph{i)} coverage area ($m^2$): a function of sensing capabilities, \emph{ii)} area of influence ($m^2$): a function of actuation capabilities, \emph{iii)} number of health packs: a function of payload capacity, and \emph{iv)} ammunition: another function of payload capacity. Let the expected value of the species-trait matrix and the corresponding matrix of variances for our example team be given by

\begin{equation}
    \bm{\mu}_Q =
    \begin{bmatrix}
    50 & 15 & 20 & 140 \\
    150 & 10 & 10 & 0 \\
    175 & 0 & 25 & 60 \\
    200 & 35 & 30 & 140
    \end{bmatrix}, \quad
    \mathrm{Var}_Q =
    \begin{bmatrix}
    3 & 1 & 1.5 & 5.6 \\
    2 & 1.5 & 0.5 & 0 \\
    1 & 0 & 2.4 & 8.7 \\
    6 & 2.3 & 3.9 & 9.2
    \end{bmatrix}
    \label{eq:ex_mu_Q_and_var_Q}
\end{equation}

Note that STRATA allows for modeling traits of different orders of magnitude. Further, for the same trait, the variation observed in each species is different. For instance, consider the ammunition trait ($4^\mathrm{th}$ columns of $\mu_Q$ and $\mathrm{Var}_Q$). The distribution of this trait is considerably different in each species. Specifically, while Species 1 has the largest average units of ammunition ($140$), it also has the smallest variance ($5.6$). On the other hand, Species 3 has considerably lower units of ammunition ($60$) while its variance ($8.7$) is considerably higher than that of Species 1. Encoding these aspects of each species enables STRATA to reason about the various trade-offs involved in recruiting agents to meet the task requirements.
\end{tcolorbox}

\subsection{Task Graph}
Given the trait model from the previous section, we require the team to accomplish $M$ tasks. We model the topology of the tasks using a strongly connected graph $\mathbb{G} = (\mathcal{E},\mathcal{V})$. The vertices $\mathcal{V}$ represent the $M$ tasks, and the edges $\mathcal{E}$ connect tasks such that the existence of an edge between two tasks represents the agents' ability to switch between them. For each species, we aim to optimize the transition rate $k_{ij}^{(s)}$ for every edge in $\mathcal{E}$, such that $0<k_{ij}^{(s)}<k_{ij,max}^{(s)}$.

The transition rate $k_{ij}^{(s)}$ defines the rate at which an agent from species $s$ currently performing task $i$ switches to task $j$. The limits on transition rates can help incorporate realistic constraints, such as the time required to travel between physical locations and to change the tools required to perform tasks. The transition rates implicitly dictate how the distribution of agents across tasks evolves in time. Note that, to account for the differences among species, the transition rates and their limits are defined separately for each species.

\begin{tcolorbox}[colback=blue!5!white, enhanced jigsaw, breakable]
\textit{Example}: Let our example problem involve 5 tasks and the task graph is shown in Fig. \ref{fig:example_graph}. Note that the graph is not fully connected. This reflects the restrictions on how the agents can switch between tasks. For instance, let each task be carried out in a different physical location. The presence (absence) of an edge between any two tasks implies that it is (not) possible for the agents to move between the two tasks. STRATA explicitly takes these restrictions into consideration when distributing agents across the task graph.
\end{tcolorbox}

\begin{figure}[htb]
    \centering
    \includegraphics[scale=0.13] {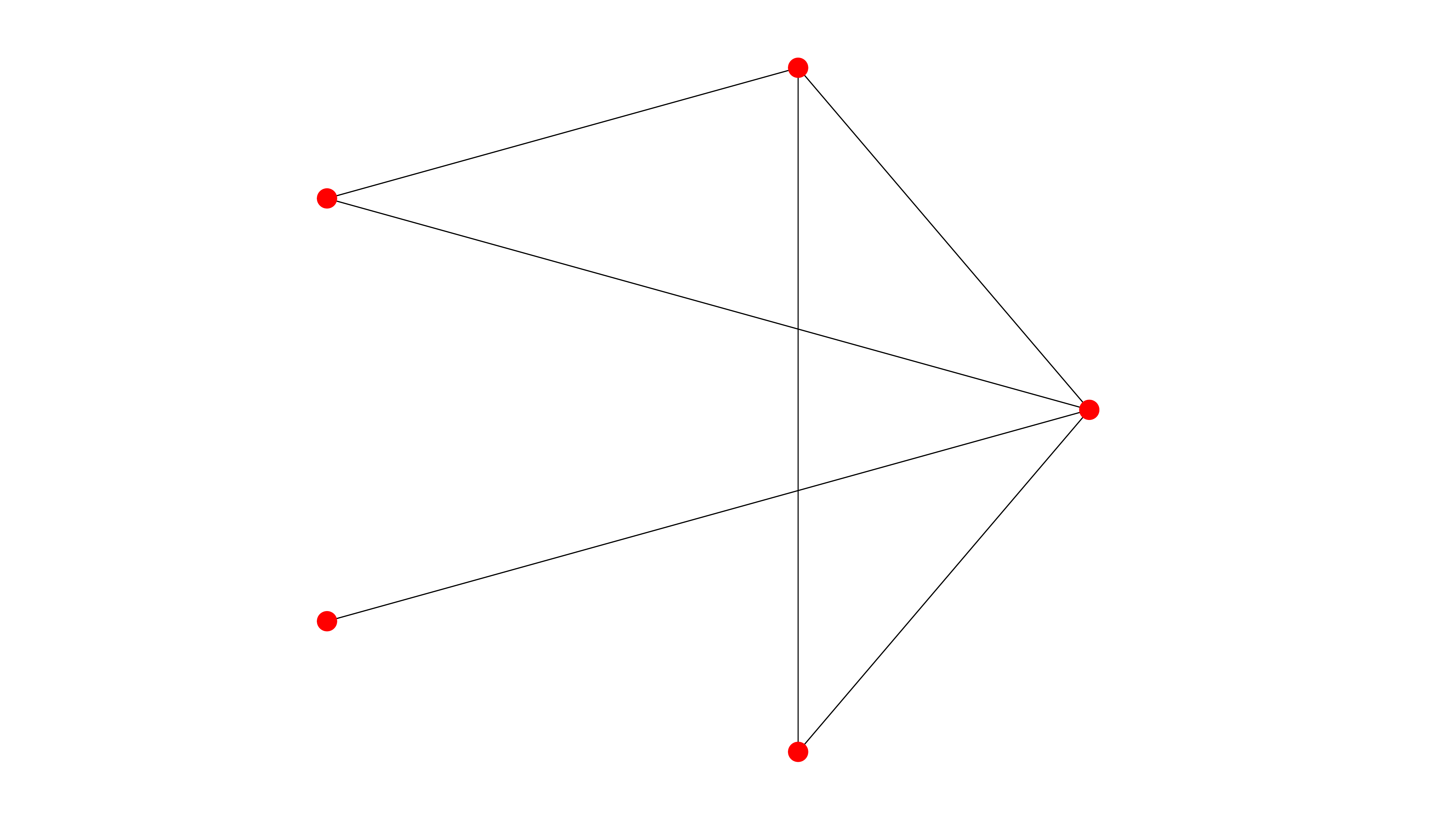}
    \caption{The task graph associated with our example task assignment problem.}
    \label{fig:example_graph}
\end{figure}

\subsection{Agent Distribution}
With the capabilities of the team and the tasks defined, the modeling of individual agents and their assignments remains. The distribution of agents from species $s$ across the $M$ tasks at time $t$ is defined by $\mathrm{x}^{(s)}(t) =$ $[x^{(s)}_{1}(t), x^{(s)}_{2}(t),$ $\cdots x^{(s)}_{M}(t)]^T$ $\in \mathbb{N}^M$. Thus the distribution of the whole team across the tasks at time $t$ can be described using a \emph{abstract state information matrix} $\bm{X}(t) = [\mathrm{x}^{(1)}(t), \mathrm{x}^{(2)}(t), \cdots, \mathrm{x}^{(S)}(t)] \in \mathbb{N}^{M \times S}$.

\begin{tcolorbox}[colback=blue!5!white, enhanced jigsaw, breakable]
\textit{Example}: Let us assume that the initial distribution of agents, perhaps a result of initial deployment or earlier task requirements, is given by
\begin{equation}
    \bm{X}(0) =
    \begin{bmatrix}
    25 & 0 & 0 & 0 \\
    0 & 25 & 0 & 0 \\
    0 & 0 & 25 & 0 \\
    0 & 0 & 0 & 25 \\
    0 & 0 & 0 & 0
    \end{bmatrix}. \label{eq:ex_X0}
\end{equation}
Thus, initially, all the $25$ agents from Species 1 are assigned to Task 1, all agents from Species 2 to Task 2, and so on. Further, no agents are assigned to Task 5. Note that each column adds up to the number of agents in the corresponding species.
\end{tcolorbox}

As in \cite{prorok2017impact}, the time evolution of the number of agents from Species $s$ at Task $i$ is explained by the following dynamical system
\begin{equation}
    \dot{x}^{(s)}_{i}(t) = \sum_{\forall j|(i,j) \in \mathcal{E}} k_{ji}^{(s)} x_j^{(s)}(t) - k_{ij}^{(s)} x_i^{(s)}(t)
\end{equation}
and thus the dynamics of each species' abstract state information can be computed as
\begin{equation}
    \dot{\mathrm{x}}^{(s)}(t) = K^{(s)}\mathrm{x}^{(s)}(t), \quad \forall s = 1, 2, \cdots, S \label{eq:x_s_dynamics}
\end{equation}
where $K^{(s)} \in \mathbb{R}^{M \times M}_{+}$ is the rate matrix of species $s$, defined as follows
\begin{equation}
K^{(s)}_{ij} =
\begin{cases}
    k_{ji}^{(s)},& \text{if } i \neq j, (i,j) \in \mathcal{E} \\
    0,              & \text{if } i \neq j, (i,j) \notin \mathcal{E} \\
    -\sum_{i=1, (i,j) \in \mathcal{E}}^{M} k_{ij}^{(s)}, & \text{if } i=j
\end{cases}
\end{equation}
The solution of the dynamics in (\ref{eq:x_s_dynamics}) for $s^\mathrm{th}$ species can be written as
\begin{equation}
    \mathrm{x}^{(s)}(\tau) = e^{K^{(s)}\tau}\ \mathrm{x}^{(s)}(0) \label{eq:x_s_solution}
\end{equation}
Thus, the time evolution of the abstract state information is given by
\begin{equation}
    \bm{X}(\tau) = \sum_{s=1}^S e^{K^{(s)}\tau}\ z^{(s)}(0)
\end{equation}
where $z^{(s)}(0) = \bm{X}(0) \odot (\bm{1} \cdot e_s) \in \mathbb{N}^{M \times S}$, $\bm{1}$ is an $M$-dimensional vector of ones, and $e_s$ is the $S$-dimensional unit vector with its $s^\mathrm{th}$ element equal to one.

\subsection{Trait Aggregation and Distribution}\label{subsec:trait_dist}
Finally, we represent the trait distribution across the tasks by the \emph{trait distribution matrix} $\bm{Y}(t) \in \mathbb{R}_{+}^{M \times U}$ and is computed as
\begin{equation}
    \bm{Y}(t) = \bm{X}(t)Q \label{eq:trait_dist_tasks}
\end{equation}
Thus, each column of $\bm{Y}(t)$ represents the aggregated amounts of the corresponding trait available at each task at time $t$. Put another way, $\bm{Y}(t)$ represents the aggregation of various traits assigned to each task at time $t$.

Note that the stochastic nature of $Q$ results in the elements of $\bm{Y}(t)$ being random variables. The expected value of $\bm{Y}(t)$ can be computed as follows
\begin{equation}
    \bm{\mu}_Y(t) =  \bm{X}(t) \bm{\mu}_Q \label{eq:mean_trait_dist_tasks}
\end{equation}
and since the elements of $Q$ are independent random variables, the closed form expression for the variance of each element of $\bm{Y}$ can be captured in the following matrix
\begin{equation}
    \mathrm{Var}_{Y}(t) = (\bm{X}(t) \odot \bm{X}(t))\ \mathrm{Var}_Q \label{eq:var_trait_dist_tasks}
\end{equation}
where $\odot$ denotes the Hadamard (entry-wise) product. Furthermore, the covariance between any two elements of $Y$ is given by
\begin{equation}
    \mathrm{Cov}\{\bm{Y}_{ij}, \bm{Y}_{kl}\} =
    \begin{cases}
        \sum_{s=1}^{S} (x^{(s)}_{i}\ x^{(s)}_{k}\ \sigma^2_{sj}), & \text{if } j = l \\
        0, & \text{otherwise}
    \end{cases}
\end{equation}

\begin{tcolorbox}[colback=blue!5!white, enhanced jigsaw, breakable]
\textit{Example}: The expected value of the species-trait matrix and the initial abstract state information of our example problem are defined in (\ref{eq:ex_mu_Q_and_var_Q}) and (\ref{eq:ex_X0}), respectively. Thus, the expected values of the initial trait distribution and the associated variances can be computed using (\ref{eq:mean_trait_dist_tasks}) and (\ref{eq:var_trait_dist_tasks}), and are thus given by
\begin{equation}
    \bm{\mu}_Y(0) =
    \begin{bmatrix}
    1250 & 375 & 500 & 3500 \\
    3750 & 250 & 250 & 0 \\
    4375 & 0 & 625 & 1500 \\
    5000 & 875 & 750 & 3500 \\
    0 & 0 & 0 & 0
    \end{bmatrix}
    \mathrm{Var}_Y(0) =
    \begin{bmatrix}
    1875 & 625 & 937.5 & 3500 \\
    1250 & 937.5 & 312.5 & 0 \\
    625 & 0 & 1500 & 5437.5 \\
    3750 & 1437.5 & 2437.5 & 5750 \\
    0 & 0 & 0 & 0
    \end{bmatrix} \nonumber
\end{equation}
The above matrices explain how the team's capabilities are expected to be distributed across the different tasks.
\end{tcolorbox}

\subsection{Cumulative vs Non-Cumulative Traits}\label{subsec:cum_vs_non-cum}
Indeed, not all traits are suitable for aggregation. To handle this fact, we explicitly differentiate between two kinds of traits: \emph{cumulative} and \emph{non-cumulative} traits. While examples of cumulative traits include ammunition, equipment, and coverage area, those of non-cumulative traits include speed and special skills. We model cumulative traits as continuous variables (i.e., $q^{(s)}_i \in \mathbb{R}_{+}, \forall i \in \mathcal{C}$, where $\mathcal{C} \subseteq \{1,2,\cdots,U\}$ is the set of indices corresponding to cumulative traits). To handle non-cumulative traits, we approximate them as binary variables (i.e., $q^{(s)}_j \in \{0,1\}, \forall j \in \{1,2,\cdots,U\} \setminus \mathcal{C}$).

Unlike prior methods that only consider the existence of capabilities to define binary traits, STRATA assigns binary values to non-cumulative traits based on the following rule
\begin{equation}
    \mu_{si} =
    \begin{cases}
        1, & \text{if}\ \mu_{si} \geq q^{\text{min}}_i \\
        0, & \text{otherwise}
    \end{cases}
\end{equation}
where $q^{\text{min}}_i$ is a user-defined minimum acceptable value for the $i^\mathrm{th}$ trait. The binary representation of non-cumulative traits captures information about whether the agents of each species posses the minimum required capabilities. Further, when aggregated to form  $\bm{\mu}_Y$ (Section \ref{subsec:trait_dist}), the binary representation of non-cumulative traits provides the total number of agents that meet the minimum requirements. 

\begin{tcolorbox}[colback=blue!5!white, enhanced jigsaw, breakable]
\textit{Example}: Let us expand our example, which had four cumulative traits, to include a new non-cumulative trait: speed ($m/s$). Let the expected speed trait for each species be given by $\mu_{q^{(5)}} = [8, 2, 5, 10]$. In order to represent the non-cumulative speed trait in the binary form, let the minimum acceptable value for speed be $q_5^\text{min} = 7\ m/s$. The matrix denoting the new species-trait distribution will contain an additional column and is given by

\begin{equation}
    \bm{\mu}'_Q =
    \begin{bmatrix}
    50 & 15 & 20 & 140 & 1\\
    150 & 10 & 10 & 0 & 0\\
    175 & 0 & 25 & 60 & 0\\
    200 & 35 & 30 & 140 & 1\\
    \end{bmatrix} \label{eq:ex_new_mu_Q}
\end{equation}

Note that the average speeds of Species 2 ($2 m/s$) and 3 ($5 m/s$) are lower than the minimum requirement of $7 m/s$ miles. Thus, Species 2 and 3 are considered to not meet the minimum requirements for speed and are assigned zeros for the same trait.

\

Given an assignment $\bm{X}(0)$, similar to cumulative traits, the effects of the new non-cumulative trait can be observed by computing the new expected a task-trait distribution using (\ref{eq:mean_trait_dist_tasks}), and is thus given by

\begin{equation}
    \bm{\mu}'_Y(0) =
    \begin{bmatrix}
    1250 & 375 & 500 & 3500 & 25\\
    3750 & 250 & 250 & 0 & 0\\
    4375 & 0 & 625 & 1500 & 0\\
    5000 & 875 & 750 & 3500 & 25\\
    0 & 0 & 0 & 0 & 0
    \end{bmatrix}
    \label{eq:ex_mu_Y_0_2}
\end{equation}
Note that, while the first four columns of the task-trait distribution (that correspond to cumulative traits) remain unchanged, the last column (that corresponds to the non-cumulative speed trait) now represents the number of agents assigned to each task that meet the minimum requirement.
\end{tcolorbox}

\section{Problem Formulation}\label{sec:problem}
Based on the modeling framework described in Section \ref{sec:model}, this section considers the problem of task assignment that achieves a desired trait distribution across tasks. Specifically, we wish to find the transition rates $K^{(s)}$ for each species such that the expected trait distribution over tasks $\mu_{Y}(t)$, defined in (\ref{eq:trait_dist_tasks}), reaches the desired trait distribution $\bm{Y}^*$ as quickly as possible while minimizing the expected variance in the trait distribution.

We express the problem as the following optimization problem
\begin{align}
    \tau^*, K^{(s)^*} & = \arg \min_{\tau, K^{(s)}} \tau \label{eq:opt_prob} \\
    s.t. & \quad \bm{X}(\tau)\mu_Q \in \mathcal{G}(\bm{Y}^*) \label{eq:constraint}  \\
    & \Vert \mathrm{Var}_Y(\tau) \Vert_F \leq  \epsilon_{\mathrm{var}}  \label{eq:var_constraint}
\end{align}
where $\epsilon_{\mathrm{var}}$ is the threshold used to limit the variance in $\bm{Y}(\tau)$,  $\mathcal{G}(\bm{Y}^*): \mathbb{R}_{+}^{M \times U} \rightarrow \Omega$, named the goal function, is a function that defines the set of admissible expected trait distribution matrices $\Omega$. Note that the constraint in (\ref{eq:var_constraint}) helps minimize the expected variance, and thereby, maximize the chances that the actual trait distribution $\bm{Y}(\tau^*)$ meets the goal function.

We consider two goal functions:
\begin{enumerate}
\item \emph{Exact matching}: $\quad \mathcal{G}_1(\bm{Y}^*) = \{\mu_{Y}|\bm{Y}^* = \mu_Y\}$

\item \emph{Minimum matching}: $\quad \mathcal{G}_2(\bm{Y}^*) = \{\mu_Y|\bm{Y}^* \preceq \mu_Y\}$
\end{enumerate}
where $\preceq$ denotes the element-wise less-than-or-equal-to operator. While goal function $\mathcal{G}_1$ requires achieving the exact desired trait distribution, goal function $\mathcal{G}_2$ requires the trait distribution be greater than or equal to the desired trait distribution. In other words, $\mathcal{G}_1$ does not allow any deviation from the desired trait distribution, and $\mathcal{G}_2$ allows for over-provisioning.

\begin{tcolorbox}[colback=blue!5!white, enhanced jigsaw, breakable]
\textit{Example}: Let the desired trait distribution for our example be given by
\begin{equation}
    \bm{Y}^* =
    \begin{bmatrix}
    0 & 0 & 0 & 0 & 0\\
    1250 & 375 & 500 & 3500 & 25\\
    3750 & 250 & 250 & 0 & 0\\
    4375 & 0 & 625 & 1500 & 0\\
    5000 & 675 & 750 & 3500 & 25
    \end{bmatrix}.
\end{equation}
Note that the expected initial trait distribution $\mu'_Y(0)$ is defined in (\ref{eq:ex_mu_Y_0_2}). Thus, the task assignment algorithm is required to compute the transition rates $K^{(s)^*}$ such that the team's expected trait distribution satisfies the chosen goal function as quickly as possible.
\end{tcolorbox}

\section{Diversity Measures}
Large heterogeneous teams with multiple species might result in capabilities that are complementary and or redundant. We study the properties of the average species-trait matrix $\mu_Q$ to understand the similarities and variations among the species of a given team. Measures of team diversity were defined in \cite{prorok2017impact} for species defined by binary traits. In this section, we define diversity measures for species defined by continuous traits. We define two measures of trait diversity for a given team, one for each of the two goal functions defined in Section \ref{sec:problem}. To this end, we utilize the following definitions.

\begin{definition}
\emph{Minspecies}: In a team described by an average species-trait matrix $\mu_Q$, a \emph{minspecies} set is a subset of rows of $\mu_Q$ with minimal cardinality, such that the system can still achieve the goal $\mathcal{G}(\bm{Y}^*)$.
\end{definition}

\begin{definition}
\emph{Minspecies cardinality}: The cardinality of the minspecies set is defined as the Minspecies cardinality and is represented by the function $\mathcal{D_G}:\mathbb{R}_+^{S \times U} \rightarrow \mathbb{N}_+$ that takes the average species-trait matrix $\mu_Q$ as the input and returns the minimum number of species required to achieve the goal $\mathcal{G}(\bm{Y}^*)$.
\end{definition}

\subsection{Eigenspecies}
First, we define a diversity measure related to the exact matching goal, $\mathcal{G}_1$.

\begin{proposition} The cardinality of eigenspecies (the minspecies corresponding to goal function $\mathcal{G}_1$) is computed as follows
\begin{align}
    \mathcal{D}_{\mathcal{G}_1} = & \min \vert \mathcal{M}_1 \vert \\
    \mathrm{s.t.} & \sum_{s \in \mathcal{M}_1} \alpha_{s\tilde{s}} \mu_{q^{(s)}} =  \mu_{q^{(\tilde{s})}},\ \forall \tilde{s} \notin \mathcal{M}_1, \forall \alpha_{s\tilde{s}} \in \mathbb{N}
\end{align}
where $\mathcal{M}_1$ is a subset of all the species in the team, $\vert \cdot \vert$ denotes the cardinality, and $\mathbb{N}$ is the set of all non-negative integers.
\end{proposition}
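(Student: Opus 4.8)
The plan is to reduce the achievability of the exact-matching goal to a purely algebraic condition on the rows of $\mu_Q$, and then to show that the stated constraint is exactly the condition under which a subset of species inherits every such achievability from the full team. Since the stated constraint does not reference any particular $\bm{Y}^*$, I read the minspecies definition at the team level: $\mathcal{M}_1$ must be able to achieve \emph{every} goal that the full team can achieve. First I would characterize which distributions a set of species $\mathcal{S}$ can realize under $\mathcal{G}_1$. Because $\bm{Y}^* = \bm{X}\mu_Q$ with $\bm{X} \in \mathbb{N}^{M \times S}$ supported only on the columns in $\mathcal{S}$, the $i^{\mathrm{th}}$ row of $\bm{Y}^*$ must equal $\sum_{s \in \mathcal{S}} x^{(s)}_i \, \mu_{q^{(s)}}$ with non-negative integer weights. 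Hence a goal is achievable by $\mathcal{S}$ if and only if every row of $\bm{Y}^*$ lies in the set of non-negative integer combinations of $\{\mu_{q^{(s)}} : s \in \mathcal{S}\}$. Because the task graph $\mathbb{G}$ is strongly connected, any target integer distribution is reachable in finite time by a suitable choice of transition rates, so the graph imposes no additional restriction on achievability and may be ignored for this measure.

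With this characterization, I would prove that $\mathcal{M}_1$ can achieve every goal the full team can achieve if and only if, for each excluded species $\tilde{s} \notin \mathcal{M}_1$, its mean trait vector satisfies $\mu_{q^{(\tilde{s})}} = \sum_{s \in \mathcal{M}_1} \alpha_{s\tilde{s}} \, \mu_{q^{(s)}}$ for some $\alpha_{s\tilde{s}} \in \mathbb{N}$. For sufficiency, take any $\bm{X}$ over the full team with $\bm{X}\mu_Q = \bm{Y}^*$ and build an assignment over $\mathcal{M}_1$ by replacing each agent of an excluded species $\tilde{s}$ at task $i$ with $\alpha_{s\tilde{s}}$ agents of species $s$ at the same task, for every $s \in \mathcal{M}_1$. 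The trait contribution at each task is preserved, since $\sum_{s \in \mathcal{M}_1} x^{(\tilde{s})}_i \alpha_{s\tilde{s}} \mu_{q^{(s)}} = x^{(\tilde{s})}_i \mu_{q^{(\tilde{s})}}$, and the new counts remain non-negative integers. For necessity, I would test the particular goal that the full team realizes by placing a single agent of species $\tilde{s}$ at one task and none elsewhere; matching this exactly with species drawn only from $\mathcal{M}_1$ forces integers with $\sum_{s \in \mathcal{M}_1} \alpha_{s\tilde{s}} \mu_{q^{(s)}} = \mu_{q^{(\tilde{s})}}$, which is precisely the stated constraint.

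Combining the two directions, the feasible minspecies candidates for $\mathcal{G}_1$ are exactly the subsets $\mathcal{M}_1$ satisfying the integer-combination constraint for all excluded species, and the minspecies cardinality $\mathcal{D}_{\mathcal{G}_1}$ is the minimum size of such a subset, which is the claimed optimization. The main obstacle I anticipate is justifying why the coefficients must be non-negative integers rather than reals or rationals: this hinges on the observation that the exact-matching goal may demand the trait footprint of a single indivisible agent of the excluded species, so any valid replacement must be realizable with whole agents, forcing $\alpha_{s\tilde{s}} \in \mathbb{N}$. A secondary subtlety is to confirm that the per-species population bounds $N_s$ are not binding for this measure — i.e., that diversity is defined purely by the combinatorial/algebraic structure of $\mu_Q$ — so that the replacement construction is always admissible.
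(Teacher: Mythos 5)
Your proof is correct and in fact strictly more complete than the paper's own. Your sufficiency step is the paper's argument in disguise: the paper factorizes $\mu_Q = A\hat{\mu}_Q$ with $A \in \mathbb{N}^{S \times \vert \mathcal{M}_1 \vert}$ and sets $\hat{\bm{X}} = \bm{X}^*A$, which is exactly your per-agent replacement written as a matrix product (the entries of $A$ in the row for an excluded species $\tilde{s}$ are your $\alpha_{s\tilde{s}}$). What you add, and the paper omits entirely, is the necessity direction: the paper never shows that a subset capable of achieving every full-team-achievable goal must satisfy the integer-combination constraint, so strictly speaking it only establishes that the true minspecies cardinality is at most the stated minimum; your test-goal argument is what turns the displayed optimization into an equality rather than an upper bound, and it is also what justifies restricting the coefficients to $\mathbb{N}$. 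Two caveats. First, your specific test goal (one agent of $\tilde{s}$ at a task, ``none elsewhere'') is not realizable by the full team in this model, because every agent must be assigned to some task and contributes its traits wherever it sits; the fix is trivial --- park all remaining agents at some other task and observe that exactly matching the first row of $\bm{Y}^*$ alone already forces $\sum_{s \in \mathcal{M}_1} \alpha_{s\tilde{s}} \mu_{q^{(s)}} = \mu_{q^{(\tilde{s})}}$ with $\alpha_{s\tilde{s}} \in \mathbb{N}$. Second, both you and the paper idealize away the per-species population bounds $N_s$ (the replacement construction may demand more agents of a species than exist) and the reachability question posed by the graph dynamics; you at least flag these assumptions explicitly, whereas the paper leaves them silent.
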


\begin{proof}
The expected species-trait matrix can be factorized as $\mu_Q = A\hat{\mu}_Q$, such that $\mu_q = A\hat{\mu}_Q$, where $A \in \mathbb{N}^{S \times \vert \mathcal{M}_1 \vert}$ and $\hat{\mu}_Q \in \mathbb{R}^{\vert \mathcal{M}_1 \vert \times U}$. Now, $\bm{Y}^* = \bm{X}^*\mu_Q = \bm{X}^*A\hat{\mu}_Q = \hat{\bm{X}}\hat{\mu}_Q$ where $\hat{\bm{X}} = \bm{X}^*A$. Thus, there exists an agent distribution $\hat{\bm{X}}$ that can achieve the goal $\mathcal{G}_1$ with only a subset of the species, defined using the minimal species-trait matrix $\hat{\mu}_Q$.
\end{proof}

Thus, $\mathcal{M}_1$ contains the minimal set of species that can exactly match any desired trait distribution without recruiting agents from species not in $\mathcal{M}_1$, and $\mathcal{D}_{\mathcal{G}_1}$ denotes the number of species that form $\mathcal{M}_1$. Note that weighting factors of the sum are restricted to be natural numbers. The motivation behind this restriction is that, when aggregating traits, the weighting factor corresponds to the number of agents we are considering when aggregating traits.

\begin{tcolorbox}[colback=blue!5!white, enhanced jigsaw, breakable]
\textit{Example}: For the team in our example, the average species-trait matrix is defined in (\ref{eq:ex_mu_Q_and_var_Q}). Note that the sum of the first row rows is equal to the last row. Further, no other rows are equal to the weighted (by natural numbers) sum of the remaining rows. Specifically, $\mu_{q^{(1)}} + \mu_{q^{(2)}} = \mu_{q^{(4)}}$. The average species-trait matrix can thus be factorized as $\mu_Q = A\hat{\mu}_Q$, where
\begin{equation}
    A =
    \begin{bmatrix}
    1 & 0 & 0 \\
    0 & 1 & 0 \\
    0 & 0 & 1 \\
    1 & 1 & 0 \\
    \end{bmatrix}
    \
    \hat{\mu}_Q =
    \begin{bmatrix}
    50 & 15 & 20 & 140 & 1\\
    150 & 10 & 10 & 0 & 0\\
    175 & 0 & 25 & 60 & 0\\
    \end{bmatrix}
\end{equation}
Thus, $\mathcal{M}_1=\{1, 2, 3\}$ and consequently $\mathcal{D}_{\mathcal{G}_1} = 3$. In other words, the traits of only one species (Species 4) can be exactly matched by aggregating the traits of other species (Species 1 and 2).
\end{tcolorbox}

\subsection{Coverspecies}
Next, we define a diversity measure for the minimum matching goal, $\mathcal{G}_2$.

\begin{proposition} The cardinality of coverspecies (the minspecies corresponding to goal function $\mathcal{G}_2$) is computed as follows
\begin{align}
    \mathcal{D}_{\mathcal{G}_2} = & \min \vert \mathcal{M}_2 \vert \\
     \mathrm{s.t.} & \sum_{s \in \mathcal{M}_2} \alpha_{s\tilde{s}} \mu_{q^{(s)}} \succeq  \mu_{q^{(\tilde{s})}},\ \forall \tilde{s} \notin \mathcal{M}_2, \forall \alpha_{s\tilde{s}} \in \mathbb{N}
\end{align}
\end{proposition}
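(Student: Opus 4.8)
The plan is to mirror the eigenspecies argument, replacing the exact factorization $\mu_Q = A\hat{\mu}_Q$ with a \emph{dominated} factorization in which equality is relaxed to the element-wise inequality $\mu_Q \preceq A\hat{\mu}_Q$. The guiding intuition is that, under the minimum-matching goal $\mathcal{G}_2$, over-provisioning is free, so it suffices to \emph{cover} (rather than exactly reproduce) the traits of each discarded species using agents drawn only from $\mathcal{M}_2$.

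First I would encode the cover constraints into a single substitution matrix. Let $\hat{\mu}_Q \in \mathbb{R}_+^{\vert \mathcal{M}_2 \vert \times U}$ collect the rows of $\mu_Q$ indexed by $\mathcal{M}_2$. I build $A \in \mathbb{N}^{S \times \vert \mathcal{M}_2 \vert}$ row by row: for a retained species $s \in \mathcal{M}_2$, the corresponding row of $A$ is the unit vector selecting that species from $\hat{\mu}_Q$; for a discarded species $\tilde{s} \notin \mathcal{M}_2$, the row is $[\alpha_{s\tilde{s}}]_{s \in \mathcal{M}_2}$, i.e. the natural-number cover weights furnished by the constraint $\sum_{s \in \mathcal{M}_2} \alpha_{s\tilde{s}} \mu_{q^{(s)}} \succeq \mu_{q^{(\tilde{s})}}$. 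By construction every row of $A\hat{\mu}_Q$ dominates the matching row of $\mu_Q$, so $\mu_Q \preceq A\hat{\mu}_Q$ holds entrywise.

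Next I take any agent distribution $\bm{X}^*$ that achieves $\mathcal{G}_2$ using the full team, i.e. $\bm{X}^*\mu_Q \succeq \bm{Y}^*$, and set $\hat{\bm{X}} = \bm{X}^*A$. Because $\bm{X}^*$ and $A$ have non-negative integer entries, $\hat{\bm{X}} \in \mathbb{N}^{M \times \vert \mathcal{M}_2 \vert}$ is a valid distribution over only the species in $\mathcal{M}_2$. Left-multiplying the entrywise inequality $\mu_Q \preceq A\hat{\mu}_Q$ by the non-negative matrix $\bm{X}^*$ preserves its direction, yielding $\hat{\bm{X}}\hat{\mu}_Q = \bm{X}^*A\hat{\mu}_Q \succeq \bm{X}^*\mu_Q \succeq \bm{Y}^*$. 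Hence $\hat{\bm{X}}$ meets the minimum-matching goal using only $\mathcal{M}_2$, which is exactly the reduction claimed, and minimizing $\vert \mathcal{M}_2 \vert$ subject to the stated cover constraint returns $\mathcal{D}_{\mathcal{G}_2}$.

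The step I expect to be the crux is the monotonicity argument above: unlike the equality case, the inequality $\mu_Q \preceq A\hat{\mu}_Q$ only propagates through $\bm{X}^*$ because $\bm{X}^* \geq 0$ entrywise, and the resulting slack is harmless precisely because $\mathcal{G}_2$ tolerates over-provisioning. I would therefore make the non-negativity of $\bm{X}^*$ and the sign of the inequality explicit, and confirm that the cover weights $\alpha_{s\tilde{s}}$ being natural numbers keeps $\hat{\bm{X}}$ integer-valued, so that the substitution corresponds to a genuine reassignment of agents rather than a fractional relaxation.
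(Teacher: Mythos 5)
Your proof takes essentially the same route as the paper's: construct the dominated factorization $\mu_Q \preceq A\hat{\mu}_Q$ from the cover weights, then show $\hat{\bm{X}} = \bm{X}^*A$ satisfies $\hat{\bm{X}}\hat{\mu}_Q \succeq \bm{X}^*\mu_Q \succeq \bm{Y}^*$. Your version is in fact tighter than the paper's one-line argument, since you make explicit the two points the paper leaves implicit (and states somewhat sloppily, writing the factorization as an equality): that propagating $\preceq$ through $\bm{X}^*$ requires its entrywise non-negativity, and that integrality of $A$ and $\bm{X}^*$ keeps $\hat{\bm{X}}$ a genuine agent distribution.
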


\begin{proof}
The expected species-trait matrix can be factorized as $\mu_Q = A\hat{\mu}_Q$, such that $\mu_Q \preceq A\hat{\mu}_Q$, where $A \in \mathbb{N}^{S \times \vert \mathcal{M}_2 \vert}$, $\hat{\mu}_Q \in \mathbb{R}^{\vert \mathcal{M}_2 \vert \times U}$. Now, $\bm{Y}^* \preceq \bm{X}^*\mu_Q \preceq \bm{X}^*A\hat{\mu}_Q$. Thus, there exists an agent distribution $\hat{\bm{X}} = \bm{X}^*A$ that can achieve the goal $\mathcal{G}_2$ with only a subset of the species, defined using the species-trait matrix $\hat{\mu}_Q$.
\end{proof}

Thus, $\mathcal{M}_2$ contains the minimal set of species that can satisfy (with potential over-provision) any desired trait distribution without recruiting agents from species not in $\mathcal{M}_2$, and $\mathcal{D}_{\mathcal{G}_2}$ is the number of species that form such a minimal set.

\begin{tcolorbox}[colback=blue!5!white, enhanced jigsaw, breakable]
\textit{Example}: For the team in our example, the average species-trait matrix is defined in (\ref{eq:ex_mu_Q_and_var_Q}). Note that each element of the last row is larger or equal to the corresponding elements in every other row. The average species-trait matrix can thus be factorized as $\mu_Q \preceq A\hat{\mu}_Q$, where
\begin{equation}
    A =
    \begin{bmatrix}
    1 \\
    1 \\
    1 \\
    1 \\
    \end{bmatrix}
    \quad
    \hat{\mu}_Q =
    \begin{bmatrix}
    200 & 35 & 30 & 140 & 1
    \end{bmatrix}
\end{equation}
Thus, $\mathcal{M}_2=\{4\}$ and consequently $\mathcal{D}_{\mathcal{G}_2} = 1$.  In other words, the traits of three species (Species 1,2, and 3) can be minimum matched by the traits of one species (Species 4).
\end{tcolorbox}

\section{Solution Approach}

This section details the proposed solution to the optimization problem defined in (\ref{eq:opt_prob})-(\ref{eq:constraint}). Our solution computes the transition rates $K^{(s)^*}$ without assuming knowledge of the optimal agent distribution $\bm{X}^*$.

\subsection{Optimization Criteria}\label{subsec:opt_criteria}
We begin by considering the time evolution of average trait distribution over the tasks. To this end, we combine (\ref{eq:mean_trait_dist_tasks}) and (\ref{eq:x_s_solution}), yielding
\begin{equation}
    \bm{\mu}_Y (\tau, K^{(1,..,S)}, \mathrm{x}^{(s)}(0)) = \sum_{s=1}^S e^{K^{(s)}\tau}\ \mathrm{x}^{(s)}(0)\ \mu_{q^{(s)}}
\end{equation}
In order to satisfy the goal function constraint, as defined in (\ref{eq:constraint}), we impose constraints on two error functions. The first error function computes the trait distribution error and is defined separately for each goal function as follows:
\begin{align}
    E_1^{\mathcal{G}_1}(\tau, K^{(1,..,S)}, \bm{X}(0)) = & \Vert Y^* - \bm{\mu}_Y(\tau) \Vert_{F}^{2}\\
    E_1^{\mathcal{G}_2}(\tau, K^{(1,..,S)}, \bm{X}(0)) = & \Vert \max [(Y^* - \bm{\mu}_Y(\tau)),\ 0] \Vert_{F}^{2}
\end{align}
where $\Vert \cdot \Vert_F$ denotes the Frobenius norm of a matrix. Note that we have omitted the dependence of $\mu_Y(\cdot)$ on the transition rates and initial conditions for brevity. The second error function measures the deviation from the steady state agent trait distribution and is defined as follows for both goal functions:
\begin{align}
    E_2(\tau, K^{(1,2,..,S)}, \bm{X}(0)) = &\sum_{s=1}^S \Vert e^{K^{(s)}\tau}\ \mathrm{x}^{(s)}(0)\ \\ \nonumber
    &\quad -\ e^{K^{(s)}(\tau+\nu)}\ \mathrm{x}^{(s)}(0) \Vert_2^2
\end{align}
The first error function $E_1$ (for both goal functions) penalizes the system when the trait distribution  at time $\tau$ does not satisfy the appropriate goal, and the error function $E_2$ penalizes the system if its trait distribution does not reach steady state at time $\tau$. Thus, enforcing upper bounds on these error functions guarantees a certain minimum level of performance. Further, as noted in (\ref{eq:var_constraint}), the expected variation in the achieved trait distribution, $\mathrm{Var}_Y(\tau)$, is also considered in the optimization.

\subsection{Optimization Problem}
Based on the definitions in Sections \ref{subsec:opt_criteria}, we reformulate the optimization problem in (\ref{eq:opt_prob})-(\ref{eq:var_constraint}) for goal $\mathcal{G}_1$ as follows
\begin{align}
        \tau^*, K^{(s)^*} & = \arg \min_{\tau, K^{(s)}} \tau \label{eq:mod_opt_prob} \\
    s.t.\ & E_1^{\mathcal{G}_1}(\tau, K^{(1,..,S)}, \bm{X}(0)) \leq \epsilon_1 \label{eq:E_1_constraint} \\
         & E_2(\tau, K^{(1,..,S)}, \bm{X}(0)) \leq \epsilon_2 \label{eq:E_2_constraint} \\
         & \Vert \mathrm{Var}_Y(\tau,K^{(1,..,S)}, \bm{X}(0)) \Vert_{F}^{2} \leq \epsilon_{\mathrm{var}} \label{eq:Var_Y_constraint} \\
         & k_{ij}^{(s)} \leq k_{ij,\text{max}}^{(s)},\ \forall i,j=\{1,..,M\},\ \forall s=\{1,..,S\} \label{eq:max_rate_constraint}  \\
         & \tau > 0 \label{eq:time_constraint}
\end{align}
where $\epsilon_1$, $\epsilon_2$, and $\epsilon_{\mathrm{var}}$ are user-defined positive scalars. Note that the optimization problem for goal $\mathcal{G}_2$ is identical except that we replace the constraint in (\ref{eq:E_1_constraint}) with $E_1^{\mathcal{G}_2}(\tau, K^{(1,..,S)}, \bm{X}(0)) \leq \epsilon_1$.

Note that the solution to the optimization problem in (\ref{eq:mod_opt_prob})-(\ref{eq:time_constraint}) guarantees minimum levels of performance, both in terms of achieving and maintaining the appropriate goal, as defined by the arbitrary constants $\epsilon_1$ and $\epsilon_2$, respectively. The constraint in (\ref{eq:Var_Y_constraint}) helps ensure that the expected variance of the achieved trait distribution is below a desired threshold. Thus, for each task, the constraint in (\ref{eq:Var_Y_constraint}) encourages the system to recruit agents who possess traits (required for the task) with relatively low variance, there by increasing the odds of the actual trait distribution satisfying the specified goal.

\subsection{Analytical Gradients}
To efficiently solve the the optimization problem in (\ref{eq:mod_opt_prob})-(\ref{eq:time_constraint}), we derive and utilize the analytical gradients of all the constraints with respect to the decision variables. In this section we define the analytical gradients of constraints defined in (\ref{eq:E_1_constraint})-(\ref{eq:time_constraint}) with respect to the unknowns $\tau$ and $K^{(s)}$. We refer the readers to \cite{prorok2017impact} for closed-form expressions of the derivatives of $E_1^{\mathcal{G}_1}$, $E_1^{\mathcal{G}_2}$ and $E_2$ with respect to both $K^{(s)}$ and $\tau$.

We adapt the closed-form expressions for derivatives of the matrix exponential \cite{kalbfleisch1985analysis}, and use the chain rule to derive the derivatives of $\Vert \mathrm{Var}_Y \Vert_{F}^{2}$ with respect $K^{(s)}$ and $\tau$ as follows \footnote{we drop the arguments of $\mathrm{Var}_Y$ for brevity}
\begin{equation}
     \frac{\partial \Vert \mathrm{Var}_Y \Vert_{F}^{2}}{\partial K^{(s)}} = \frac{\partial \Vert \mathrm{Var}_Y \Vert_{F}^{2}}{\partial e^{K^{(s)}\tau}}\ \frac{\partial e^{K^{(s)}\tau}}{\partial K^{(s)}\tau}\ \frac{\partial K^{(s)}\tau}{\partial K^{(s)}}
\end{equation}
\begin{equation}
     \frac{\partial \Vert \mathrm{Var}_Y \Vert_{F}^{2}}{\partial \tau} = \frac{\partial \Vert \mathrm{Var}_Y \Vert_{F}^{2}}{\partial e^{K^{(s)}\tau}}\ \frac{\partial e^{K^{(s)}\tau}}{\partial K^{(s)}\tau}\ \frac{\partial K^{(s)}\tau}{\partial \tau}
\end{equation}
where
\begin{equation}
    \frac{\partial \Vert \mathrm{Var}_Y \Vert_{F}^{2}}{\partial e^{K^{(s)}\tau}} = 4 \big[ (\mathrm{Var}_Y\mathrm{Var}_Q^T) \odot X(t) \big] z^{(s)}(0)
\end{equation}
Thus, the closed form expressions of the derivatives are given by
\begin{equation}
    \frac{\partial \Vert \mathrm{Var}_Y \Vert_{F}^{2}}{\partial K^{(s)}} = (V^{(s)})^{-T} B^{(s)} (V^{(s)})^{T} \tau
\end{equation}
\begin{equation}
    \frac{\partial \Vert \mathrm{Var}_Y \Vert_{F}^{2}}{\partial \tau} = \sum_{s=1}^{S} 1^T \big[ (V^{(s)})^{-T} B^{(s)} (V^{(s)})^{T} K^{(s)} \big] 1
\end{equation}
where $K^{(s)} = V^{(s)} D^{(s)} (V^{(s)})^{-1}$ is the eigenvalue decomposition of $K^{(s)}$, $D^{(s)} = \mathrm{diag}(d_1, \cdots , d_M)$ is the diagonal matrix with the eigenvalues of $K^{(s)}$, $B^{(s)}$ is a $M \times M$ matrix defined as

\begin{equation}
    B^{(s)} = \bigg[ (V^{(s)})^T \frac{\partial \Vert \mathrm{Var}_Y \Vert_{F}^{2}}{\partial e^{K^{(s)}\tau}} (V^{(s)})^{-T} \odot W^{(s)} \bigg]
\end{equation}
and $W(\tau)$ is a $M \times M$ matrix with its $kl^\mathrm{th}$ element is given by
\begin{equation}
    W_{kl}^{(s)} =
    \begin{cases}
        \frac{(e^{d_k \tau} - e^{d_l \tau})}{d_k \tau - d_l\tau}, & k \neq l \\
        e^{d_k \tau}, & k=l
    \end{cases}
\end{equation}

\subsection{Decentralized Online Implementation}
\hl{We note that it is possible to realize a decentralized implementation of the proposed optimization problem in ({\ref{eq:mod_opt_prob}})-({\ref{eq:time_constraint}}). Indeed, prior work has demonstrated that one can compute the state information $X$ locally, and enable online adaptation (re-optimization) of the transition rates as the state information is updated} {\cite{milam2005receding,prorok2017impact,bandyopadhyay2017probabilistic,jang2018local}}. \hl{Our approach, while introducing new capabilities, directly inherits these advantages from prior work.}

\section{Experimental Evaluation}\label{sec:exps}

We evaluate STRATA using two sets of experiments. In both experiments we compare STRATA's performance with that of a baseline. Our baseline method is a bootstrapped version of the binary-trait-based method introduced in \cite{prorok2017impact}\footnote{source code acquired from https://github.com/amandaprorok/diversity.git}.

\subsection{\hl{Baseline and Metrics}}

Since the baseline algorithm~\cite{prorok2017impact} requires the desired trait distribution to be specified in the binary trait space and only considers binary species-trait distributions, we make the following modifications to the baseline. We define a binary species-trait matrix to be $\bar{Q} = \mathrm{sign}(\mu_Q)$, where $\mathrm{sign}(\cdot)$ is the signum operator applied to each element of its matrix argument. We also define a modified desired trait distribution for the baseline: $\bar{Y} = \floor*{Y^* \oslash \mu_Y}$, where $\floor{\cdot}$ is the floor function applied to each element of a matrix, $\oslash$ refers to Hadamard (element-wise) division, $\mu_Y = [\mu_Y^1 \odot 1_M, \cdots, \mu_Y^U \odot 1_M]$, $\mu_Y^i$ is the mean desired value of the $i^\mathrm{th}$ trait computed across all species, and $1_M$ is a $M$-dimensional vector of ones. \hl{We implemented both the baseline and STRATA on a computer running Intel Core i7-4770K with 16 GB of memory.}

\begin{figure*}[ hbt!]
    \centering
    \includegraphics[width=0.7\textwidth] {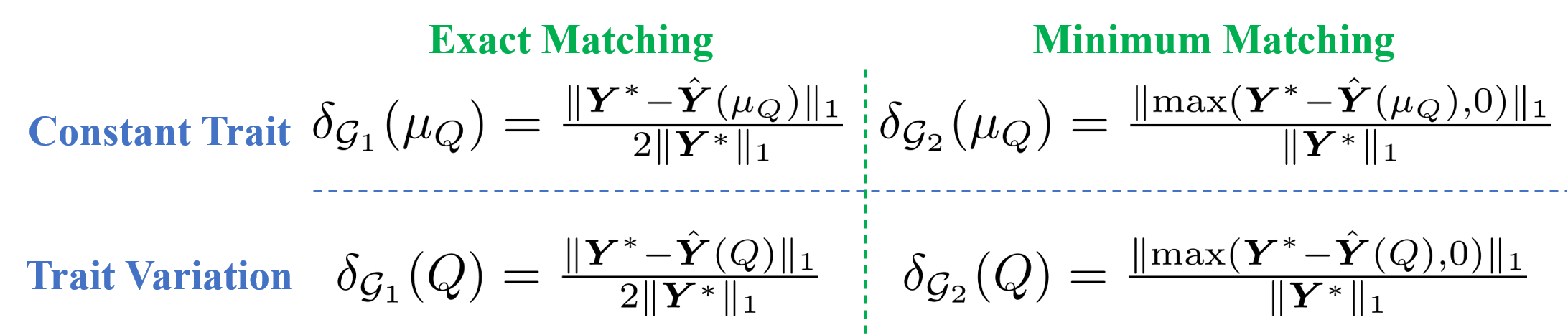}
    \caption{The four error measures used to quantify the proportion of trait mismatch.}
    \label{fig:metrics}
\end{figure*}

The task assignment performance of each method is evaluated in terms of four measures of percentage trait mismatch, as defined in Fig. \ref{fig:metrics}. As illustrated, the metrics measure performance in two scenarios: when the actual traits of the agents are (1) assumed to be known and equal to the average of the corresponding trait (\textit{constant trait}), and (2) assumed to be unknown and sampled based on the trait distribution (\textit{trait variation}). Additionally, in each scenario, the performance is measured in terms of both \textit{exact} and \textit{minimum} trait matching, irrespective of the optimization goal.

\begin{figure*}[htb!]
    \centering
    \includegraphics[trim={3cm 4.8cm 3cm 5.6cm}, clip, width=0.8\textwidth] {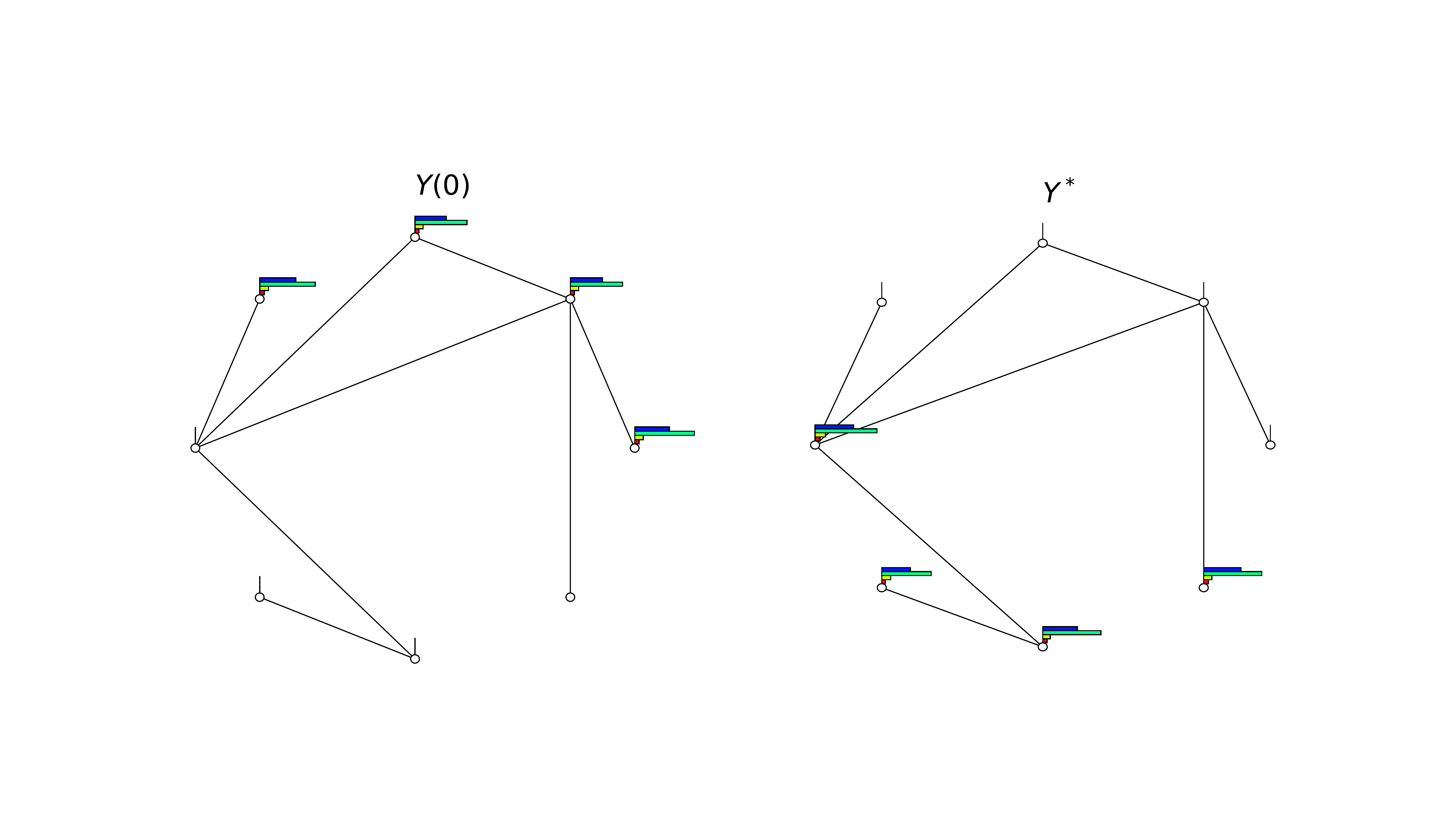}
    \caption{The initial (left) and desired (right) team configurations from an exemplary run with eight tasks (nodes) and four traits. The bar plots denote the trait distribution at each task and the edges represent the possibility of switching between the corresponding tasks.}
    \label{fig:initial_and_desired_graph}
\end{figure*}

\subsection{Simulation}\label{subsec:sim}
In the first set of experiments, we study the performances of STRATA and the baseline in terms of matching the desired trait requirements for a large heterogeneous team. To this end, we simulate a task assignment problem with $M = 8$ nodes (tasks), $U = 5$ traits (3 cumulative and 2 non cumulative traits), and $S = 5$ species (each with 200 agents). We present the results computed from 100 independent simulation runs.

In each run, we make the following design choices. The task graph along with its connections is randomly generated. The initial and final agent distributions, $\bm{X}(0)$ and $\bm{X}^*$, are uniformly randomly generated. Based on the obtained $\bm{X}^*$, a desired trait distribution $\bm{Y}^*$ is computed for each run. The expected value of the species-trait matrix is chosen to be $\mu_{Q} = [a, a, a, b, b]^T$, where each element of $a \in \mathbb{R}_{+}^U$ is sampled from a uniform distribution: $a_i \sim \mathcal{U}(0,10)$, and each element of  $U$-dimensional $b$ is sampled from a discrete uniform distribution: $b_i \sim \mathcal{U}\{0,1\}$. Each element of $\mathrm{Var}_Q$ is sampled from a uniform distribution: $\mathcal{U}(0,2)$. An example initial and desired trait distribution is illustrated in Fig. \ref{fig:initial_and_desired_graph}. \hl{The maximum transition rates $k_{ij,\text{max}}^{(s)}, \forall i,j,s$ are chosen to be $0.02\ S^{-1}$. The thresholds $\epsilon_1$ and $\epsilon_2$ are both chosen so as to be equivalent to $5\%$ of $\Vert Y^* \Vert_F$.}

To ensure a fair comparison, we limit both STRATA and the baseline framework to a maximum of 20 meta iterations of the basin hopping algorithm during each run. In order to measure $\delta_{\mathcal{G}_1} (Q)$ and $\delta_{\mathcal{G}_2} (Q)$ for each run, 10 samples of the trait-species matrix $Q$ are generated to compute $\hat{\bm{Y}}(Q)$.

\begin{figure*}[htb!]
    \centering
    \includegraphics[width=\textwidth]{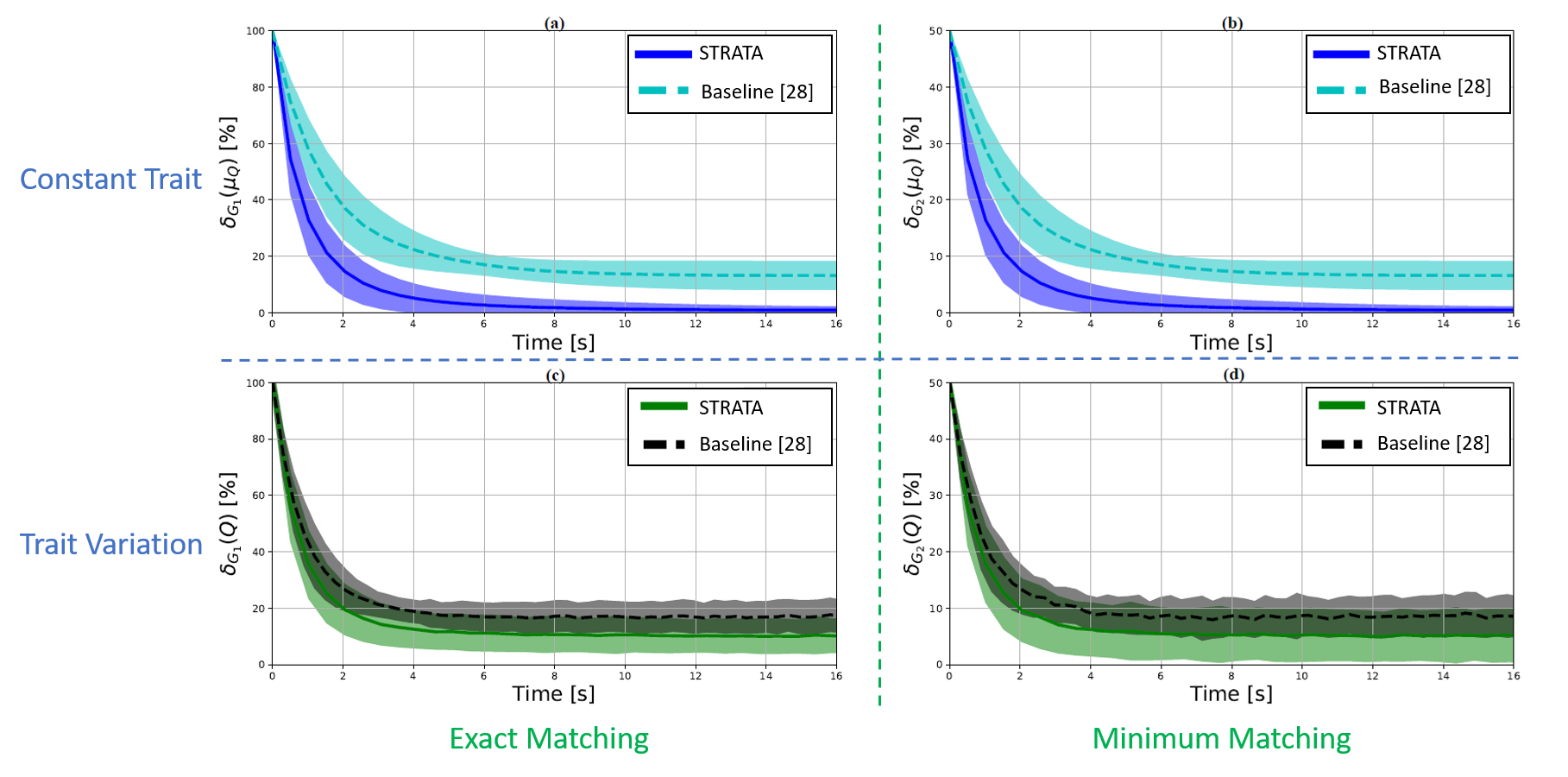}
    \caption{Comparison of the performances of STRATA and the baseline \cite{prorok2017impact} (binary) framework when optimizing for exact matching ($\mathcal{G}_1$). The performance of each framework is quantified in terms of four measures of percentage trait mismatch.}
    \label{fig:exact_comparison}
\end{figure*}

\begin{figure*}[htb!]
    \centering
    \includegraphics[width=\textwidth]{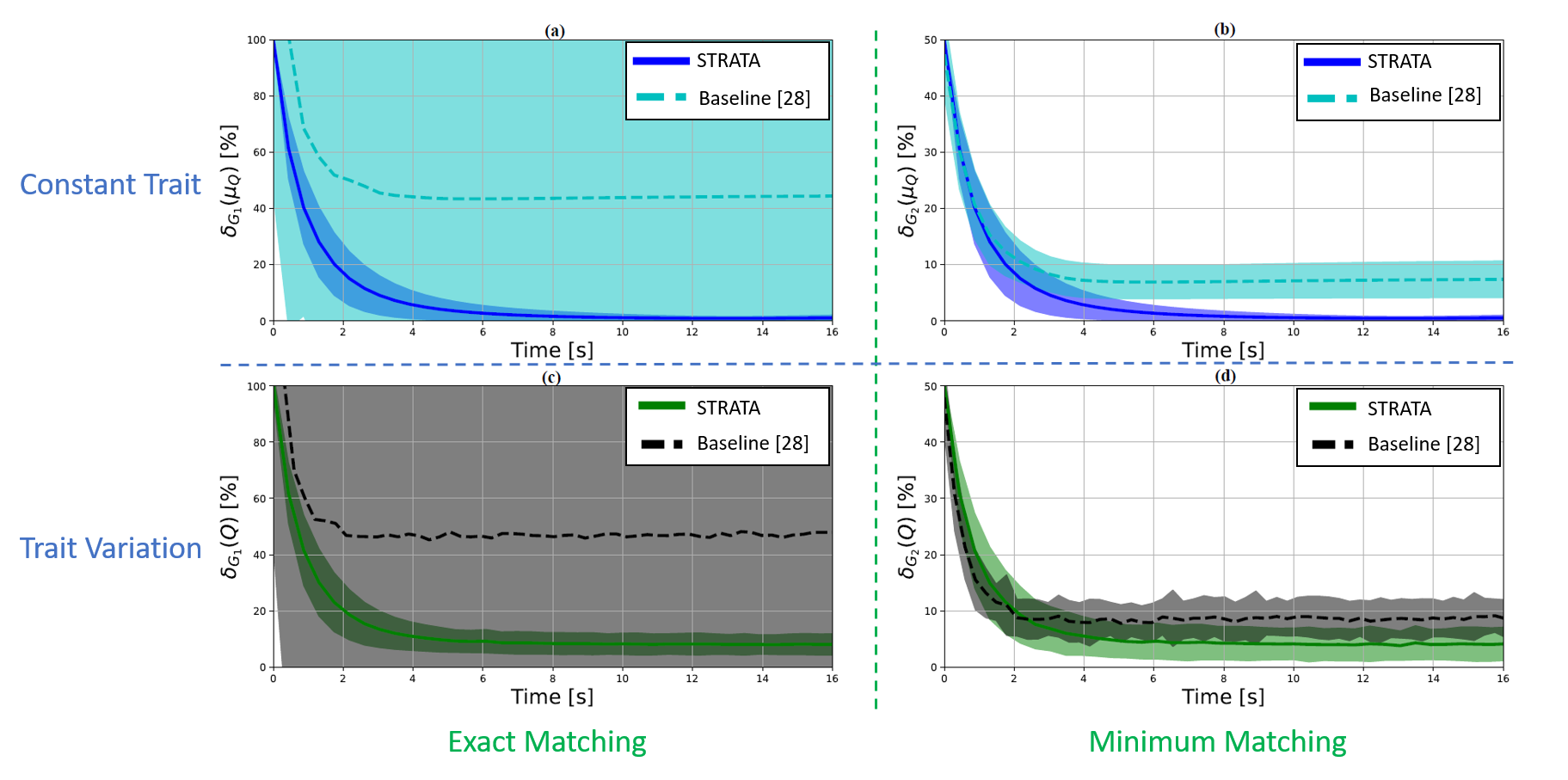}
    \caption{Comparison of the performances of STRATA and the baseline \cite{prorok2017impact} (binary) framework when optimizing for minimum trait matching ($\mathcal{G}_2$). The performance of each framework is quantified in terms of four measures of percentage trait mismatch.}
    \label{fig:min_comparison}
\end{figure*}

\textit{\textbf{Exact Trait Matching}}: First, we compute the transition rates according to both STRATA and the binary trait framework \cite{prorok2017impact} with respect to the function Goal $\mathcal{G}_1$. STRATA is found to converge during 79 of the 100 simulation runs and the binary trait framework during 10 runs. In Fig \ref{fig:exact_comparison}, we present the performances of both frameworks by plotting the errors (defined in Fig. \ref{fig:metrics}) as functions of time. Note that the error plots for each method reflect the error measures computed only across the converged runs.

As shown in Fig. \ref{fig:exact_comparison}(a) and (b), STRATA consistently performs better than the baseline in terms of deterministic performance, as measured by both $\delta_{\mathcal{G}_1} (\mu_Q)$ and $\delta_{\mathcal{G}_2} (\mu_Q)$. Further, as shown in Fig. \ref{fig:exact_comparison}(c) and (d), when the agents' traits are randomly sampled, STRATA performs better than the baseline on average. The stochastic nature of the species-trait matrix forces the errors to be larger than zero.

\begin{figure}
\centering
\includegraphics[width=0.5\columnwidth]{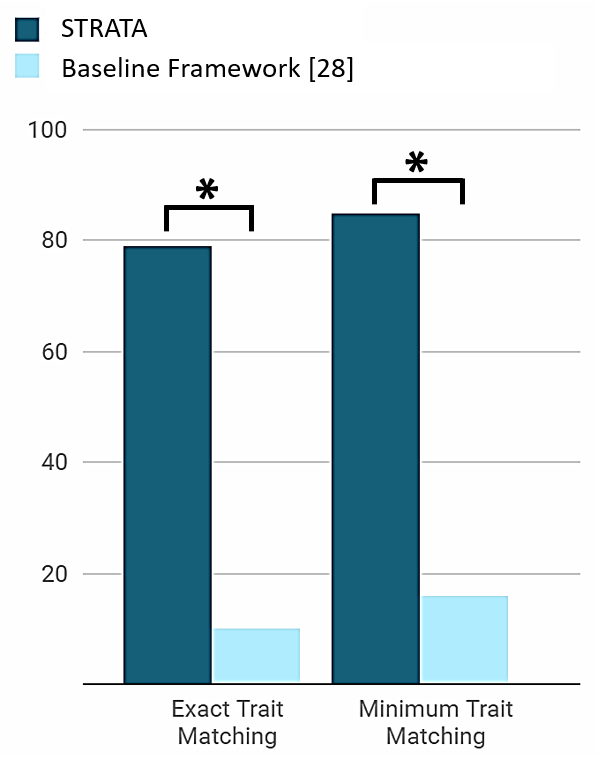}
\caption{\label{fig:converged_runs}Number of converged runs for each algorithm out of 100 simulation runs.}
\end{figure}

\textit{\textbf{Minimum Trait Matching}}: Next, we compute the transition rates according to both STRATA and the binary trait model \cite{prorok2017impact} with respect to the function Goal $\mathcal{G}_2$. STRATA is found to converge during 85 of the 100 simulation runs and the binary trait framework during 16 runs. In Fig \ref{fig:min_comparison}, we present the performances of both frameworks by plotting the errors (defined in Fig. \ref{fig:metrics}) as functions of time.

STRATA consistently performs better than the baseline when optimizing to satisfy minimum trait distribution, as measured by $\delta_{\mathcal{G}_2}(\mu_Q)$. On average, STRATA performs better than the baseline when considering stochastic species-trait matrix, as measured by $\delta_{\mathcal{G}_2}(Q)$. These assertions are supported by the plots in Fig. \ref{fig:min_comparison}(b) and (d).  In Fig. \ref{fig:min_comparison}(a) and (c), the baseline exhibits high error and variance in terms of both $\delta_{\mathcal{G}_1}(\mu_Q)$ and $\delta_{\mathcal{G}_1}(Q)$. This implies that when optimizes for $\mathcal{G}_2$, the binary trait model, unlike STRATA, results in a high level of over-provisioning.

\textit{\textbf{\hl{Discussion}}}: \hl{As demonstrated by the results above, reasoning about stochastic traits results in consistently satisfying complex task requirements. Stochastic trait models outperform binary trait models for a number a reasons. Firstly, binary trait models are incapable of reasoning about requirements in the continuous trait space. Secondly, to construct the modified (binary) trait distribution $\bar{Y}$, one is required to consider, at minimum, the average value of each trait in the team. In the process, however, the binary trait model ignores all variations both at the species and individual levels. Indeed, the advantages of considering these variations are reflected in terms of more accurate assignment of agents in Figs. {\ref{fig:exact_comparison}} and {\ref{fig:min_comparison}}}.

\hl{Given that the results above reflect only the performance from converged runs, it is important to investigate the ratio of converged runs to total number of runs. As seen in Fig. {\ref{fig:converged_runs}}, STRATA successfully converged to a solution in significantly ($p<0.001$) more runs than the binary trait framework for both exact trait matching ($\mathcal{G}_1$) and minimum trait matching ($\mathcal{G}_2$). This observation demonstrates that considering stochastic and continuous trait models over binary models is considerably more likely to satisfy complex trait requirements.}

\begin{table*}[bp]
\centering
\resizebox{0.9\textwidth}{!}{%
\begin{tabular}{cccccc}
\hline
 & \# Species & \begin{tabular}[c]{@{}c@{}}\# agents \\ per Species\end{tabular} & \# Tasks & \# Traits & Role Assignment \\ \hline
Team A & 4 & 3 & 3 & 4 & STRATA \\ \hline
Team B & 4 & 3 & 3 & 4 & Baseline \cite{prorok2017impact} \\ \hline
Team C & 4 & 3 & 3 & 4 & Random \\ \hline
\\
\end{tabular}%
}
\caption{Specifications of the teams implemented in the capture the flag environment.}
\label{CtF_table}
\end{table*}

\subsection{Capture the Flag}\label{subsec:ctf}

In this section, we study the question: Does STRATA improve higher-level team performance? To this end, we quantify the effect of STRATA on team performance in a capture the flag (CTF) game. \hl{Note that we do not explicitly model the adversarial elements of the game, and focus only on effectively assigning agents to roles, based on an empirically-identified ideal trait distribution $Y^*$}. We built the environment using the Unity 3D game engine.  We compare the performances of three teams, named A, B, and C. The details pertaining to each team are listed in Table \ref{CtF_table}. The three tasks (roles) in the game are \emph{defend}, \emph{attack}, and \emph{heal}, in that order. The four traits are \emph{speed}, \emph{viewing distance}, \emph{health}, and \emph{ammunition}, in that order. We consider speed and viewing distance are non-cumulative traits, and health and ammunition as cumulative traits. \hl{Similar to the experiments in simulation, the maximum transition rates $k_{ij,\text{max}}^{(s)}, \forall i,j,s$ are chosen to be $0.02\ S^{-1}$, and the thresholds $\epsilon_1$ and $\epsilon_2$ are both chosen so as to be equivalent to $5\%$ of $\Vert Y^* \Vert_F$.} \hl{The average time to optimize the transition rates for the game with $M = 3$, $U = 4$, and $S = 4$ is around $0.88\ ms$ for the baseline and $0.97\ ms$ for STRATA}.

The baseline task assignment strategy, similar to the experiment in Section \ref{subsec:sim}, is a bootstrapped version of the binary-trait-based method introduced in \cite{prorok2017impact}. For the random task assignment strategy, each agent is assigned uniformly randomly to one of the three roles. Note that both the algorithms are provided with identical teams, consisting of 12 agents. Thus, any variation in performance is limited to the task assignment strategy used by each team and the inherent randomness of the game.

The traits of the agents are sampled from the following stochastic species-trait matrix
\begin{equation}
    \mu_Q =
    \begin{bmatrix}
    1.5 & 15 & 90 & 40 \\
    1.5 & 30 & 60 & 40 \\
    3 & 15 & 80 & 30 \\
    3 & 30 & 350 & 30 \\
    \end{bmatrix}
    \
    \mathrm{Var}_Q =
    \begin{bmatrix}
    0.35 & 5 & 10 & 3 \\
    0.35 & 5 & 10 & 3 \\
    0.35 & 5 & 10 & 3 \\
    0.35 & 5 & 10 & 3 \\
    \end{bmatrix} \nonumber
\end{equation}
The minimum acceptable value for the non-cumulative traits are chosen to be as follows: $q^{(1)}_{min} = 0 m/s$ for speed and $q^{(2)}_{min} = 10 m$ for viewing distance. The desired trait distribution is designed to be
\begin{equation}
    \bm{Y}^* =
    \begin{bmatrix}
    2 & 2 & 120 & 80 \\
    6 & 6 & 340 & 200 \\
    4 & 4 & 320 & 140 \\
    \end{bmatrix} \nonumber
\end{equation}

All games are played with two teams at a time, one versus another. We compare the performances of each team  against the other two teams. We consider a team to have won a game if the team captures the opponent's flag and brings it back to the starting position. If neither team is able to capture and bring back the opponent's flag in 120 seconds, then the team with the highest number of active agents is considered the winner. Lastly, if both teams retain the same number of active agents after 120 seconds, the game is considered to have ended in a draw. The relative performances of all three approaches are illustrated in Fig. \ref{fig:ctf_wins}.

\begin{figure*}[htb!]
    \centering
    \includegraphics[trim={0cm 5cm 0cm 3.5cm}, clip, width=\textwidth]{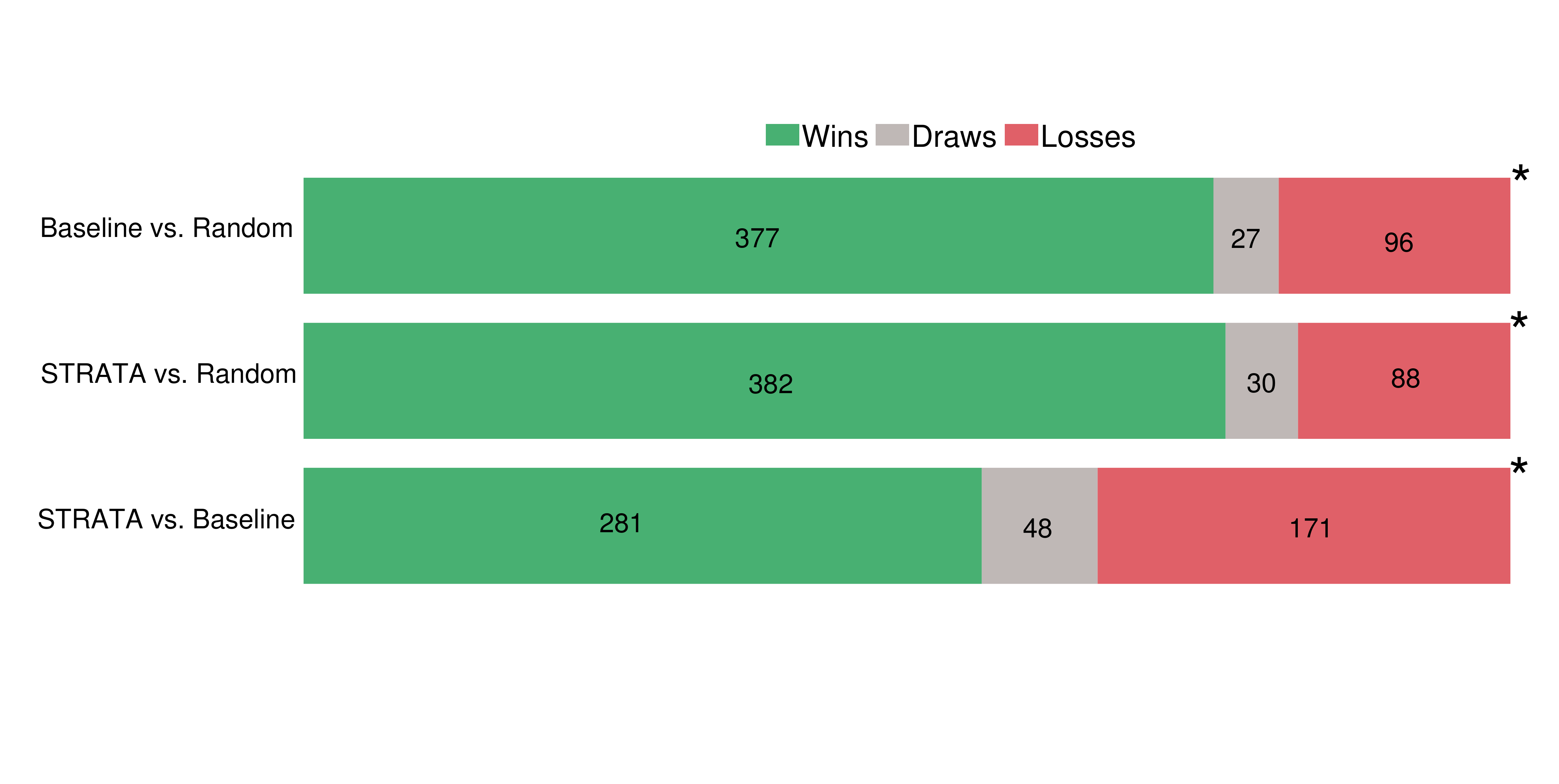}
    \caption{\hl{Relative performances of random task assignment, the baseline framework} \cite{prorok2017impact} \hl{and STRATA across $500$ runs of the capture-the-flag game. An asterisk indicates the statistically significant ($p<0.001$) difference between the proportion of wins to losses in each match-up.}}
    \label{fig:ctf_wins}
\end{figure*}

\textit{\textbf{Discussion}}: \hl{As shown in Fig. {\ref{fig:ctf_wins}}, given appropriate $\bar{Y}$ and $Y^*$, both the baseline framework and STRATA are more likely to win against random task assignment. However, when compared head-to-head with the same $\bar{Y}$ and $Y^*$, STRATA is more likely to win against the baseline framework. Further, based on the $z$-test, we find that the proportions of wins are statistically significantly ($p<0.001$) higher than those of losses in all three conditions. These observations are likely due to the fact that the baseline framework implicitly reasons about the average trait values when corresponding the modified task requirements $\bar{Y}$ that is compatible to the binary trait space. This type of reasoning, while limited, is still more effective than not reasoning about any of the factors that influence team performance. However, similar to our observations in the simulated experiments, reasoning about continuous trait models along with inter- and intra- species variations is not considerably more effective than reasoning about binary approximations of traits. Thus, STRATA's ability to reason about the traits and task requirements translates to high-level team performance.}

\section{Conclusion}
We presented STRATA, a unified framework capable of effective task assignments in large teams of heterogeneous agents. The members of the team are modeled as belonging to different species, each defined by a set of its capabilities. STRATA models capabilities in the continuous space and explicitly takes into account both species-level and agent-level variations. Further, we quantified the diversity of a given team by introducing two separate notions of \emph{minspecies}, each specifying the minimal subset of species necessary to achieve the corresponding goal. Finally, we illustrated the necessity and effectiveness of STRATA using two sets of experiments. The experimental results demonstrate that STRATA (1) successfully distributes a large heterogeneous team to meet complex task requirements, (2) consistently performs better than the baseline framework that only considers binary traits, and (3) results in improved higher-level team performance in a simulated game of capture-the-flag.

\hl{Our proposed approach and its limitations have revealed exciting avenues for future research. Firstly, STRATA can be extended to consider the importance of different tasks or traits, resulting in preferential or importance-based task assignment. Secondly, given the complexity of the optimization problem, it is yet unclear how to provide guarantees on the optimality of the solution. Thirdly, to minimize risk, STRATA reduces variance in expected trait distribution by choosing the most consistent species for tasks. However, it might beneficial in scenarios with very limited resources to consider maximizing trait variance in the interest of the possibility of higher rewards. Finally, we have introduced new measures of diversity based in a comprehensive class of trait models. It would be interesting to explore how to effectively utilize these measures in order study the trade-offs involved in the synthesis and composition of effective multi-agent teams.}

\section*{Acknowledgement}
\hl{We would like to thank the anonymous reviews for their constructive feedback that greatly helped improve the quality of this article.} This work was supported by the Army Research Lab under Grant W911NF-17-2-0181 (DCIST CRA).

\bibliography{main}{}

\begin{thebibliography}{36}
\providecommand{\natexlab}[1]{#1}
\providecommand{\url}[1]{{#1}}
\providecommand{\urlprefix}{URL }
\expandafter\ifx\csname urlstyle\endcsname\relax
  \providecommand{\doi}[1]{DOI~\discretionary{}{}{}#1}\else
  \providecommand{\doi}{DOI~\discretionary{}{}{}\begingroup
  \urlstyle{rm}\Url}\fi
\providecommand{\eprint}[2][]{\url{#2}}

\bibitem[{Albrecht and Stone(2018)}]{albrecht2018autonomous}
Albrecht SV, Stone P (2018) Autonomous agents modelling other agents: A
  comprehensive survey and open problems. \emph{Artificial Intelligence}
  258:66--95

\bibitem[{Atamt{\"u}rk et~al.(1996)Atamt{\"u}rk, Nemhauser, and
  Savelsbergh}]{atamturk1996combined}
Atamt{\"u}rk A, Nemhauser GL, Savelsbergh MW (1996) A combined lagrangian,
  linear programming, and implication heuristic for large-scale set
  partitioning problems. \emph{Journal of heuristics} 1(2):247--259

\bibitem[{Bandyopadhyay et~al.(2017)Bandyopadhyay, Chung, and
  Hadaegh}]{bandyopadhyay2017probabilistic}
Bandyopadhyay S, Chung SJ, Hadaegh FY (2017) Probabilistic and distributed
  control of a large-scale swarm of autonomous agents. \emph{IEEE Transactions
  on Robotics} 33(5):1103--1123

\bibitem[{Beni(2004)}]{beni2004swarm}
Beni G (2004) From swarm intelligence to swarm robotics. In:
  \emph{International Workshop on Swarm Robotics}, Springer, pp 1--9

\bibitem[{Berman et~al.(2009)Berman, Hal{\'a}sz, Hsieh, and
  Kumar}]{berman2009optimized}
Berman S, Hal{\'a}sz {\'A}, Hsieh MA, Kumar V (2009) Optimized stochastic
  policies for task allocation in swarms of robots. \emph{IEEE Transactions on
  Robotics} 25(4):927--937

\bibitem[{Bordini et~al.(2007)Bordini, Dastani, Dix, and
  Seghrouchni}]{bordini2007programming}
Bordini RH, Dastani M, Dix J, Seghrouchni AEF (2007) \emph{Programming
  Multi-Agent-Systems: 4th International Workshop, Revised and Invited Papers},
  vol 4411. Springer

\bibitem[{Cohen et~al.(2016)Cohen, Uras, Kumar, Xu, Ayanian, and
  Koenig}]{cohen2016improved}
Cohen L, Uras T, Kumar TS, Xu H, Ayanian N, Koenig S (2016) Improved solvers
  for bounded-suboptimal multi-agent path finding. In: \emph{International
  Joint Conferences on Artificial Intelligence (IJCAI)}, pp 3067--3074

\bibitem[{DeCostanza et~al.(2018)DeCostanza, Marathe, Bohannon, Evans,
  Palazzolo, Metcalfe, and McDowell}]{decostanza2018enhancing}
DeCostanza AH, Marathe AR, Bohannon A, Evans AW, Palazzolo ET, Metcalfe JS,
  McDowell K (2018) Enhancing human-agent teaming with individualized, adaptive
  technologies: A discussion of critical scientific questions. Tech. rep., US
  Army Research Laboratory Aberdeen Proving Ground United States

\bibitem[{Dias et~al.(2006)Dias, Zlot, Kalra, and Stentz}]{dias2006market}
Dias MB, Zlot R, Kalra N, Stentz A (2006) Market-based multirobot coordination:
  A survey and analysis. \emph{Proceedings of the IEEE} 94(7):1257--1270

\bibitem[{Gerkey and Matari{\'c}(2004)}]{gerkey2004formal}
Gerkey BP, Matari{\'c} MJ (2004) A formal analysis and taxonomy of task
  allocation in multi-robot systems. \emph{The International Journal of
  Robotics Research} 23(9):939--954

\bibitem[{Guerrero and Oliver(2003)}]{guerrero2003multi}
Guerrero J, Oliver G (2003) Multi-robot task allocation strategies using
  auction-like mechanisms. \emph{Artificial Research and Development in
  Frontiers in Artificial Intelligence and Applications} 100:111--122

\bibitem[{Hoffman and Padberg(1993)}]{hoffman1993solving}
Hoffman KL, Padberg M (1993) Solving airline crew scheduling problems by
  branch-and-cut. \emph{Management science} 39(6):657--682

\bibitem[{Hsieh et~al.(2008)Hsieh, Hal{\'a}sz, Berman, and
  Kumar}]{hsieh2008biologically}
Hsieh MA, Hal{\'a}sz {\'A}, Berman S, Kumar V (2008) Biologically inspired
  redistribution of a swarm of robots among multiple sites. \emph{Swarm
  Intelligence} 2(2-4):121--141

\bibitem[{Jang et~al.(2018{\natexlab{a}})Jang, Shin, and
  Tsourdos}]{jang2018anonymous}
Jang I, Shin HS, Tsourdos A (2018{\natexlab{a}}) Anonymous hedonic game for
  task allocation in a large-scale multiple agent system. \emph{IEEE
  Transactions on Robotics} 34(6):1534--1548

\bibitem[{Jang et~al.(2018{\natexlab{b}})Jang, Shin, and
  Tsourdos}]{jang2018local}
Jang I, Shin HS, Tsourdos A (2018{\natexlab{b}}) Local information-based
  control for probabilistic swarm distribution guidance. \emph{Swarm
  Intelligence} 12(4):327--359

\bibitem[{Kalbfleisch and Lawless(1985)}]{kalbfleisch1985analysis}
Kalbfleisch J, Lawless JF (1985) The analysis of panel data under a markov
  assumption. \emph{Journal of the American Statistical Association}
  80(392):863--871

\bibitem[{Khamis et~al.(2015)Khamis, Hussein, and Elmogy}]{khamis2015multi}
Khamis A, Hussein A, Elmogy A (2015) Multi-robot task allocation: A review of
  the state-of-the-art. In: \emph{Cooperative Robots and Sensor Networks 2015},
  Springer, pp 31--51

\bibitem[{Korsah et~al.(2013)Korsah, Stentz, and
  Dias}]{korsah2013comprehensive}
Korsah GA, Stentz A, Dias MB (2013) A comprehensive taxonomy for multi-robot
  task allocation. \emph{The International Journal of Robotics Research}
  32(12):1495--1512

\bibitem[{Li et~al.(2017)Li, de~{\'A}vila, Gao, Zhang, and Wang}]{li2017micro}
Li J, de~{\'A}vila BEF, Gao W, Zhang L, Wang J (2017) Micro/nanorobots for
  biomedicine: Delivery, surgery, sensing, and detoxification. \emph{Science
  Robotics} 2(4)

\bibitem[{Lin and Zheng(2005)}]{lin2005combinatorial}
Lin L, Zheng Z (2005) Combinatorial bids based multi-robot task allocation
  method. In: \emph{IEEE International Conference on Robotics and Automation
  (ICRA)}, IEEE, pp 1145--1150

\bibitem[{Ma and Koenig(2016)}]{ma2016optimal}
Ma H, Koenig S (2016) Optimal target assignment and path finding for teams of
  agents. In: \emph{International Conference on Autonomous Agents \& Multiagent
  Systems (AAMAS)}, International Foundation for Autonomous Agents and
  Multiagent Systems, pp 1144--1152

\bibitem[{MacAlpine et~al.(2015)MacAlpine, Price, and Stone}]{AAAI15-MacAlpine}
MacAlpine P, Price E, Stone P (2015) {SCRAM}: Scalable collision-avoiding role
  assignment with minimal-makespan for formational positioning. In: \emph{AAAI
  Conference on Artificial Intelligence}

\bibitem[{Matthey et~al.(2009)Matthey, Berman, and
  Kumar}]{matthey2009stochastic}
Matthey L, Berman S, Kumar V (2009) Stochastic strategies for a swarm robotic
  assembly system. In: \emph{International Conference on Robotics and
  Automation (ICRA)}, IEEE, pp 1953--1958

\bibitem[{Milam et~al.(2005)Milam, Franz, Hauser, and
  Murray}]{milam2005receding}
Milam MB, Franz R, Hauser JE, Murray RM (2005) Receding horizon control of
  vectored thrust flight experiment. \emph{IEE Proceedings-Control Theory and
  Applications} 152(3):340--348

\bibitem[{Olfati-Saber et~al.(2007)Olfati-Saber, Fax, and
  Murray}]{olfati2007consensus}
Olfati-Saber R, Fax JA, Murray RM (2007) Consensus and cooperation in networked
  multi-agent systems. \emph{Proceedings of the IEEE} 95(1):215--233

\bibitem[{Parker and Tang(2006)}]{parker2006building}
Parker LE, Tang F (2006) Building multirobot coalitions through automated task
  solution synthesis. \emph{Proceedings of the IEEE} 94(7):1289--1305

\bibitem[{Petchey and Gaston(2002)}]{petchey2002functional}
Petchey OL, Gaston KJ (2002) Functional diversity (fd), species richness and
  community composition. \emph{Ecology letters} 5(3):402--411

\bibitem[{Prorok et~al.(2017)Prorok, Hsieh, and Kumar}]{prorok2017impact}
Prorok A, Hsieh MA, Kumar V (2017) The impact of diversity on optimal control
  policies for heterogeneous robot swarms. \emph{IEEE Trans Robotics}
  33(2):346--358

\bibitem[{Shehory and Kraus(1995)}]{shehory1995kernel}
Shehory O, Kraus S (1995) A kernel-oriented model for autonomous-agent
  coalition-formation in general environments. In: \emph{Australian Workshop on
  Distributed Artificial Intelligence}, Springer, pp 31--45

\bibitem[{Shehory and Kraus(1998)}]{shehory1998methods}
Shehory O, Kraus S (1998) Methods for task allocation via agent coalition
  formation. \emph{Artificial intelligence} 101(1):165--200

\bibitem[{Shkurti et~al.(2012)Shkurti, Xu, Meghjani, Higuera, Girdhar, Giguere,
  Dey, Li, Kalmbach, Prahacs, Turgeon, Rekleitis, and Dudek}]{shkurti2012multi}
Shkurti F, Xu A, Meghjani M, Higuera JCG, Girdhar Y, Giguere P, Dey BB, Li J,
  Kalmbach A, Prahacs C, Turgeon K, Rekleitis I, Dudek G (2012) Multi-domain
  monitoring of marine environments using a heterogeneous robot team. In:
  \emph{International Conference on Intelligent Robots and Systems (IROS)},
  IEEE/RSJ, pp 1747--1753

\bibitem[{Tokekar et~al.(2016)Tokekar, Vander~Hook, Mulla, and
  Isler}]{tokekar2016sensor}
Tokekar P, Vander~Hook J, Mulla D, Isler V (2016) Sensor planning for a
  symbiotic uav and ugv system for precision agriculture. \emph{IEEE
  Transactions on Robotics,} 32(6):1498--1511

\bibitem[{Vail and Veloso(2003)}]{vail2003multi}
Vail D, Veloso M (2003) Multi-robot dynamic role assignment and coordination
  through shared potential fields. \emph{Multi-robot systems} pp 87--98

\bibitem[{Vig and Adams(2006)}]{vig2006multi}
Vig L, Adams JA (2006) Multi-robot coalition formation. \emph{IEEE transactions
  on robotics} 22(4):637--649

\bibitem[{Werfel et~al.(2014)Werfel, Petersen, and
  Nagpal}]{werfel2014designing}
Werfel J, Petersen K, Nagpal R (2014) Designing collective behavior in a
  termite-inspired robot construction team. \emph{Science} 343(6172):754--758

\bibitem[{Wurman et~al.(2008)Wurman, D'Andrea, and
  Mountz}]{wurman2008coordinating}
Wurman PR, D'Andrea R, Mountz M (2008) Coordinating hundreds of cooperative,
  autonomous vehicles in warehouses. \emph{AI magazine} 29(1):9

\end{thebibliography}
\bibliographystyle{spbasic}
\end{document}